\begin{document}

%%%%%%%%% TITLE
\title{Fair Feature Distillation for Visual Recognition}

\author{Sangwon Jung\textsuperscript{\rm 1}\thanks{The first three authors have contributed equally.}, Donggyu Lee\textsuperscript{\rm 1}\footnotemark[1], Taeeon Park\textsuperscript{\rm 1}\footnotemark[1] ~and Taesup Moon\textsuperscript{\rm 2}\thanks{Corresponding author (E-mail: \texttt{tsmoon@snu.ac.kr})} \\\
% \textsuperscript{\rm 1}ECE, Sungkyunkwan University, Suwon, Korea\\
\textsuperscript{\rm 1}Department of Electrical and Computer Engineering, Sungkyunkwan University, Suwon, Korea\\
\textsuperscript{\rm 2}Department of Electrical and Computer Engineering, Seoul National University, Seoul, Korea\\
{\tt\small \{s.jung, ldk308, pte1236\}@skku.edu,
\tt\small tsmoon@snu.ac.kr}
% For a paper whose authors are all at the same institution,
% omit the following lines up until the closing ``}''.
% Additional authors and addresses can be added with ``\and'',
% just like the second author.
% To save space, use either the email address or home page, not both
% \and
% Second Author\\
% Institution2\\
% First line of institution2 address\\
% {\tt\small secondauthor@i2.org}
}

\maketitle
%\thispagestyle{empty}

%%%%%%%%% ABSTRACT
\begin{abstract}
Fairness is becoming an increasingly crucial issue for computer vision, especially in the human-related decision systems. However, achieving algorithmic fairness, which makes a model produce indiscriminative outcomes against protected groups, is still an unresolved problem. In this paper, we devise a systematic approach which reduces algorithmic biases via feature distillation for visual recognition tasks, dubbed as MMD-based Fair Distillation (MFD). While the distillation technique has been widely used in general to improve the prediction accuracy, to the best of our knowledge, there has been no explicit work that also tries to improve fairness via distillation. Furthermore, We give a theoretical justification of our MFD on the effect of knowledge distillation and fairness. 
% By utilizing the maximum mean discrepancy (MMD) loss function, we show our proposed method achieves a significant improvement of fairness through the distillation process. 
Throughout the extensive experiments, we show our MFD significantly mitigates the bias against specific minorities without any loss of the accuracy on both synthetic and real-world face datasets.

\end{abstract}
\vspace{-0.15in}
%%%%%%%%% BODY TEXT
\section{Introduction} \label{intro}

Based on the remarkable performance of deep neural networks, computer vision has become one of the core technologies in many applications that affect various aspects of society; \textit{e.g.}, facial recognition \cite{masi2018deep}, AI-assisted hiring \cite{nguyen2016hirability}, healthcare diagnostics \cite{gulshan2016development}, and law enforcement \cite{garvie2016perpetual}. 
Due to these social applications of computer vision algorithms, it is becoming increasingly essential for them to be \textit{fair}; namely, 
the outcomes of systems should not be discriminative against any certain groups on the basis of sensitive attributes.
For example, any automated system that incorporates photographs into a decision process (\eg, job interview) should not rely on certain sensitive attributes, such as race or gender \cite{datadomain}. However, recent studies demonstrate that commercial API systems for facial analysis expose the gender/race bias in widely used face datasets \cite{gendershades, RFW}. 

In this work, we are interested in the setting in which an already deployed model has been identified as unfair. 
The usual approach of the so-called \textit{in-processing} methods to mitigate the unfair bias is to re-train the model from scratch with an additional fairness constraint \cite{agarwal2018reductions, jiang2020identifying, zafar2017fairness}. However, such approaches typically do not utilize any predictive information already learned out by the deployed model, and hence, would lead to sacrificing the accuracy for the improved fairness. 
To address above limitation, the \textit{knowledge distillation (KD)} \cite{hintonKD} technique can be considered as a potential tool for leveraging the deployed model's predictive power while re-training with fairness constraints. Nonetheless, the typical existing KD methods \cite{hintonKD, fitnet, at, crd, nst} focused only on improving the accuracy, and considering \textit{both} the accuracy and fairness during the process of KD is not straightforward. We aim to resolve this challenge by proposing a new fairness-aware feature distillation scheme.

% (without utilizing any information from the deployed model) \cite{agarwal2018reductions, jiang2020identifying, zafar2017fairness}. Such approach typically would lead to sacrificing accuracy for fairness, which can become problematic.
% In order to solve this problem, we aim to re-train a model through specialized knowledge distillation, which utilizes predictive power of the deployed model while imposing fairness constraints, eventually improving both accuracy and fairness.

% Meanwhile, in order to utilize the deployed model, we can use the knowledge distillation technique that transfers knowledge from one model (teacher) to another (student). However, the existing knowledge distillation approaches \cite{hintonKD, fitnet, at, crd, nst} focus only on improving the accuracy without considering the fairness of the student. 
% To solve these problems of two independent topics, we aim to re-train a model, \textit{i.e.}, the student,  by jointly utilizing the predictive power of the deployed model, \textit{i.e.}, the teacher, and imposing a fairness constraint via knowledge distillation at the same time. 

\begin{figure}[t]
\begin{center}
% \centering
  \includegraphics[width=0.9\linewidth]{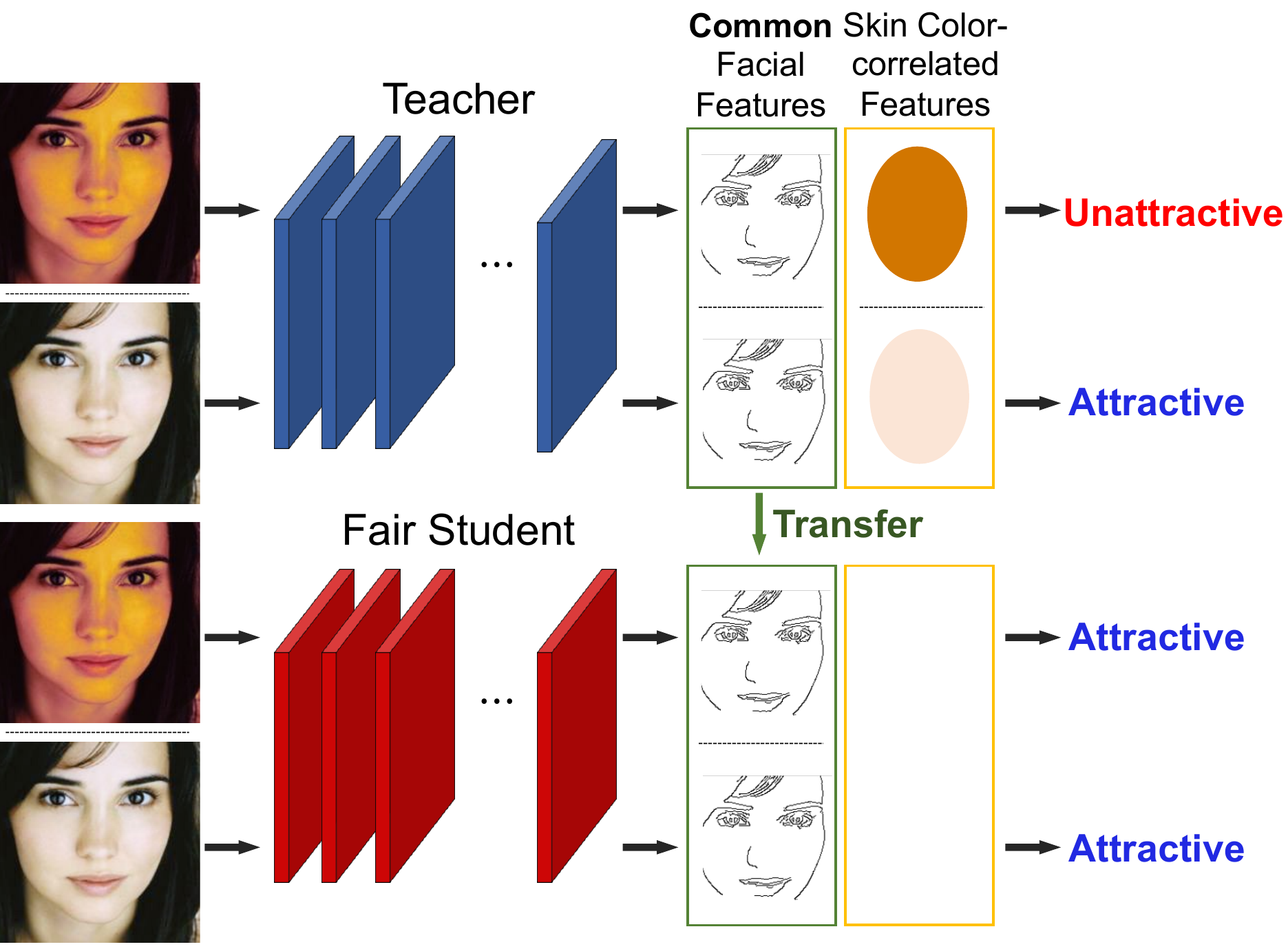}
\end{center}
  \caption{An illustrative example of motivation to our work. The ``teacher'' model may depend heavily on the skin color when deciding whether the face is attractive, while it may also have learned useful common (unbiased) facial features. To train a fair ``student'' model via feature distillation, only the unbiased common features from the teacher should be transferred to the student so that both high accuracy and fairness can be achieved.}
\label{fig:common_concept}
\vspace{-0.1in}
\end{figure}

Figure \ref{fig:common_concept} 
illustrates the key idea of our work. We assume that even when the original deployed model, the ``teacher'' model, may be heavily biased (\textit{i.e.}, heavily use the sensitive ``skin color'' attribute), it could also have learned useful group-indistinguishable common (unbiased) features that are effective for achieving high prediction accuracy (\textit{e.g.}, ``face shape'', etc.). Our intuition is that when training a ``student'' model, if only those common unbiased features can be transferred from the teacher, the student should be able to achieve higher accuracy, compared to the ordinary \textit{in-processing} methods that re-train from scratch, as well as better fairness, compared to the original teacher.

In order to realize above intuition, we propose a fair feature distillation technique by utilizing the \textit{maximum mean discrepancy} (MMD), dubbed as MMD-based Fair Distillation (MFD); this is, to the best of our knowledge, the first approach to improve both accuracy and fairness via distillation. More concretely, we devise a regularization term for training a student that enforces the distribution of the group-conditioned features of the student to get closer to the distribution of the group-averaged features of the teacher in the MMD sense. We further provide a theoretical understanding that our MFD regularization can indeed lead to improving both the accuracy and fairness of the student in a principled way. Namely, we show our regularization term induces the distributions of the group-conditioned features of the student to get close to each other across all the sensitive groups (\textit{i.e.}, promotes fairness), while making all those distributions also get close to the distribution of the group-averaged features of the highly accurate teacher (\textit{i.e.}, improves accuracy via the distillation effect).

% and justify that it can lead to improving both the accuracy and fairness of the teacher model. Namely, 

% To that end, we propose a fair machine learning algorithm by utilizing the \textit{maximum mean discrepancy} (MMD), dubbed as MMD-based Fair Distillation (MFD), that is, to the best of our knowledge, the first approach to improve fairness via distillation. 

% Under our motivation, we construct a regularization term that is intended to make a student learn the group-indistinguishable informative features via minimizing the distance between the feature distribution of the teacher and the group-conditioned one of the student. Furthermore, we provide a theoretical understanding that MFD indeed 
% combines both fairness and knowledge distillation. In other words, we show that our regularization makes the student fair by minimizing the distance between the conditional distributions of features given groups (\textit{i.e.}, achieving fairness) while making the feature distributions of the student and the teacher closer (\textit{i.e.}, using distillation). 
% With this intuitive interpretation of our regularization, we provide a theoretical understanding that 

As a result, we convincingly show through extensive experiments that our MFD can simultaneously improve the accuracy as well as considerably mitigate the unfair bias of a model. Firstly, we construct a synthetic dataset, CIFAR-10S \cite{wang2020towards}, and systematically validate our motivation illustrated in Figure \ref{fig:common_concept}. 
Then, 
% we identify our method significantly outperforms to other fairness baselines 
with additional experiments on two real-world datasets, UTKFace \cite{utkface} and CelebA \cite{celeba}, we identify that our MFD is the only method that can \textit{consistently} improve both accuracy and fairness of the original unfair teacher on all three datasets, compared to the three types of baselines: ordinary KD methods, representative in-processing methods that re-train from scratch, and methods that naively combine the in-processing methods with KD methods. 
% that only implement the ordinary KD, does in-processing re-training from scratch, and naively combines the in-processing and KD methods. 
Finally, we demonstrate the validity of our theoretical bound via systematic ablation studies.

% the KD only, in-processing only, and a naive combination of the in-processing and KD method. 

% we identify that only our method significantly reduces the discrimination and improves the accuracy through all experiments
% on two real datasets, UTKFace \cite{utkface} and CelebA \cite{celeba} as well as CIFAR-10S.

% compared to the KD only, in-processing only, and a naive combination of the in-processing and KD method. 

% Finally, we demonstrate that our theoretical bound is valid via an ablation study, meaning our MFD effectively considers both fairness and distillation.

\section{Related Works}
\noindent\textbf{Algorithmic fairness}\ \ Recently, a number of studies have focused on mitigating unfairness, as exhaustively surveyed in \cite{fairness360}. Fairness algorithms are mainly divided into three categories depending on the training pipelines they apply:
% when to fix unfairness: 
\textit{pre-processing} methods \cite{creager2019flexibly, louizos2016variational, datadomain, zemel2013learning} that refine a dataset to remove the source of unfairness before training a model,
% processes training dataset directly; 
\textit{in-processing} methods \cite{ agarwal2018reductions, oversampling, jiang2020identifying, kamishima2012fairness, zafar2017fairness, adv_debiasing} that take the fairness constraints into account when training a model, and \textit{post-processing} methods \cite{modelprojection, hardt2016equality} that modify the predicted labels
% the labels 
after training. 

In this paper, we focus on the \textit{in-processing} methods, since they can be particularly useful for the circumstances in which controlling the model itself is possible.
% , which are useful in the circumstances of being better controlling over learned models tha others.
% which have an advantage of being flexible to work with any data and better controlling over learned models, than pre-processing and post-processing algorithms \cite{algorithmicfairness}.
% In-processing methods mainly exploit fairness constraint to the learning objective. 
Among them, some researches formulate optimization problems with a fairness constraint indicating statistical independence between the model's outputs and groups \cite{kamishima2012fairness, zafar2017fairness}. 
% Kamishima \textit{et al.} \cite{kamishima2012fairness} and Zafar \textit{et al.} \cite{zafar2017fairness} devised an optimization problem with fairness constraint indicating statistical independence between model's outputs and groups. 
On the other hand, Zhang \textit{et al.} \cite{adv_debiasing} adopted a simple adversarial debiasing (AD) technique for a model to give outputs from which the sensitive attribute is not predictable by an adversary.
% \textit{et al.} utilize an adversary to remove the effect of sensitive attribute at the outcomes. 
Moreover, controlling the contribution of data points to a loss function during training can also help obtain an unbiased machine. It can be done by strategically sampling (SS) the data \cite{oversampling},
\textit{e.g.}, oversampling, or assigning unequal weights to the training samples while performing a sequence of classification \cite{jiang2020identifying}. 
% \cite{jiang2020identifying, agarwal2018reductions} introduced sample-weighting algorithms changing the weight of the sample while performing a sequence of classification. 
In the computer vision domain, the discrimination problem has usually been tackled in facial analysis, such as face recognition \cite{wang2020mitigating, RFW}. Wang \textit{et al.} \cite{RFW} mitigated racial bias using the domain adaptation technique. Wang and Deng \cite{wang2020mitigating} utilized reinforcement learning. Their algorithms, however, have been specific only to the face recognition tasks. 
% work well only face recognition.

\noindent\textbf{Knowledge distillation}\ \ For the purpose of knowledge transfer and model compression, diverse approaches to distill helpful information from a learned model have been proposed for deep neural networks.
% for deep neural networks based knowledge distillation have been proposed, which is to distill helpful information learned by teacher to student. 
After the original work by Hinton \textit{et al.} \cite{hintonKD} (HKD), which matches the softmax output distribution of the teacher to that of the student, various extensions have focused on how to exploit the learned features.
% various methods that can deliver the teacher's feature have been presented. 
The work of Romero \textit{et al.} \cite{fitnet} (FitNet) made the student mimic the features of the teacher through linear regression. Zagoruyko \textit{et al.} \cite{at} proposed attention transfer (AT)  which transfers the knowledge using the attention map. Further, Yim \textit{et al.} \cite{fsp} and Park \textit{et al.} \cite{rkd} studied approaches using gram matrix and relation map respectively.
% Yim \textit{et al.} \cite{fsp}, Zagoruyko \textit{et al.} \cite{at}, and Park \textit{et al.} \cite{rkd} transfer the features from the teacher into student in the more processed form using gram matrix, attention map and relation map, respectively. 
Unlike the previous methods, 
% in \cite{vid, pkt, crd, nst}, 
several approaches proposed feature distillation algorithms devised from the statistical point of views \cite{vid, nst, pkt, crd}. 
In particular, Passalis \textit{et al.} \cite{pkt} suggested methods to reduce the distance between the teacher and the student feature distributions measured via \textit{Kullback-Liebler} divergence, and Huang \textit{et al.} \cite{nst} invented neuron selectivity transfer (NST) utilizing MMD.
% In particular, Passalis \textit{et al.} \cite{pkt} and Huang \textit{et al.} suggest methods to reduce the distance between the teacher and student feature distributions measured via \textit{Kullback-Liebler} divergence and \textit{maximum mean discrepancy}, respectively.
Although numerous distillation methods have been proposed, none of them explicitly considered the fairness issue during distillation. 

\begin{figure*}[!t]
    \centering
    \includegraphics[width=0.8\textwidth]{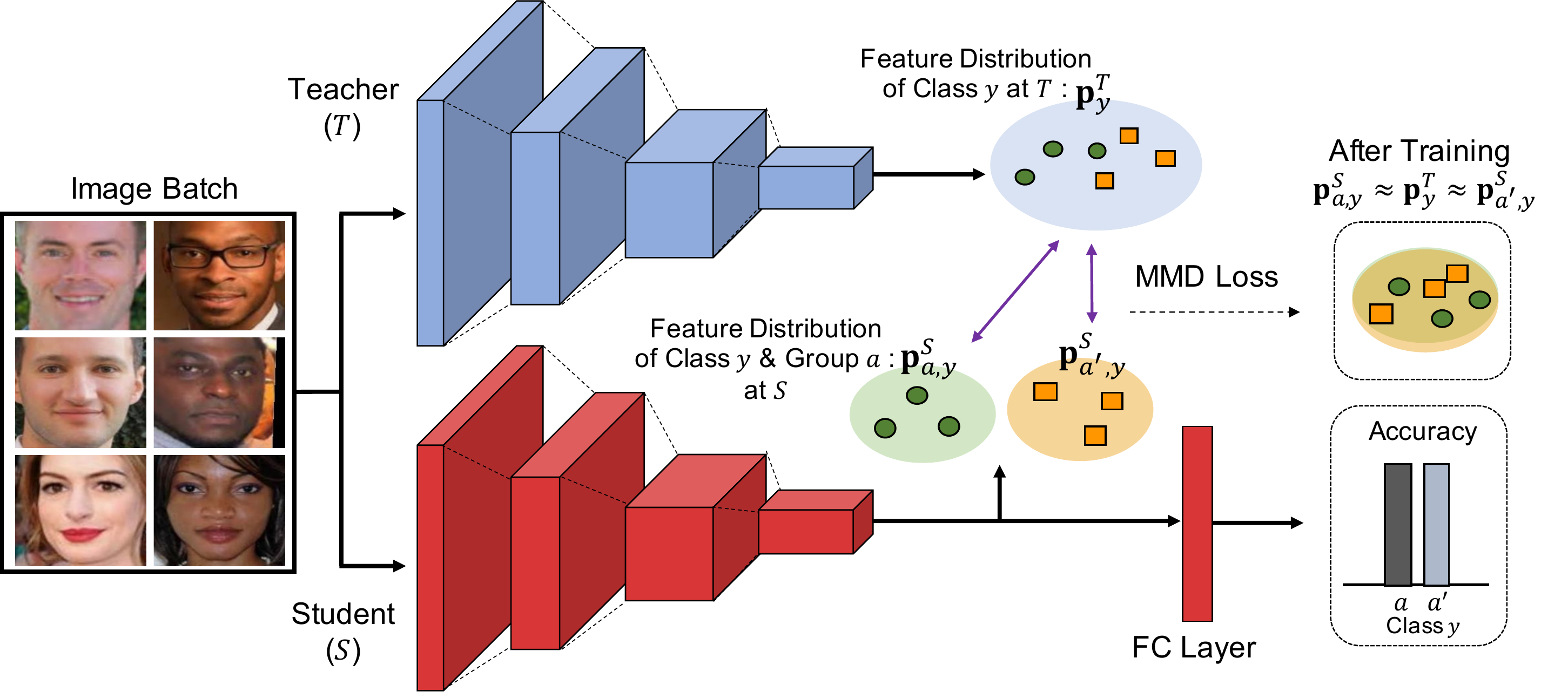}
    \caption{The illustrative concept of MFD. The student treats all groups fairly while learning the teacher's knowledge by minimizing our MMD-based loss between $\mathbf{p}^{T}_y$ and $\mathbf{p}^{S}_{a,y}$ for all $a\in\mathcal{A}$. Sample images are drawn from UTKFace dataset \cite{utkface}.}
    \label{fig:fairkd_concept}
    \vspace{-0.1in}
\end{figure*}
\section{Fairness Criterion}
A lot of fairness criteria have been introduced including statistical parity \cite{dwork2012fairness}, equalized odds \cite{hardt2016equality}, overall accuracy equality \cite{berk2017fairness}, fairness through awareness \cite{dwork2012fairness} and counterfactual fairness \cite{Kusner_counterfactual}. Although each of them tries to tackle the fairness problem from various aspects, choosing the most proper one is still an open question since the notion of fairness can differ according to the social, cultural background and the application scenario.
% Furthermore, Kleinberg \textit{et al.} \cite{kleinberg2017inherent} shows the impossibility of satisfying all the fairness criteria simultaneously.
% It is ordinarily received that fairness criterion can depend on context and application.

% Since there is no standard fairness criterion in computer vision, 
In this work, we consider the \textit{equalized odds} \cite{hardt2016equality}, which can be naturally adapted to $M$-ary classification and measure per-class accuracy discrepancies between groups. 
% for fairness being naturally adapted to $m$-ary classification setting. 
Equalized odds was originally defined in the binary case; given the target variable $Y=y \in \{-1,1\}$, the equalized odds requires the predictor $\tilde{Y}$ (\eg, the decision of a neural network) and the sensitive attribute $A\in \mathcal{A}$ to be conditionally independent given $y$, \textit{i.e.}, $\Tilde{Y} \perp A | Y=y$. 
% independent conditioned on the label, \emph{i.e.}, $\Tilde{Y} \perp A | Y=y$. 
For non-binary $Y$, equalized odds can also be used to measure the fairness of a model by requiring that $\forall a, a' \in \mathcal{A}$, $y \in \mathcal{Y}=\{1, \dots , M\}$,  $\text{Pr}(\Tilde{Y}=y|A=a,Y=y)=\text{Pr}(\Tilde{Y}=y|A=a',Y=y)$. Then, as the equalized odds-based metrics, two types of \textit{difference of equalized odds} (DEO) are defined upon taking the maximum or the average over $y$ as follows, respectively:
\begin{align}
\operatorname{DEO_{M}} \triangleq &~ \max_{y} \Big(\max_{a,a'} \Big(|\text{Pr}(\Tilde{Y}=y|A=a,Y=y) \nonumber \\& -\text{Pr}(\Tilde{Y}=y|A=a',Y=y)|\Big)\Big),\\
\vspace{-0.3in}
\operatorname{DEO_{A}} \triangleq &~ \frac{1}{|\mathcal{Y}|} \sum_{y} \Big(\max_{a,a'} \Big(|\text{Pr}(\Tilde{Y}=y|A=a,Y=y) \nonumber \\& -\text{Pr}(\Tilde{Y}=y|A=a',Y=y)|\Big)\Big).
\end{align} 
% \begin{align}
% \operatorname{DEO_{M}} \triangleq &~ \max_{y}\underset{y}{max} \Big(\underset{a,a'}{max} \Big(|Pr(\Tilde{Y}=y|A=a,Y=y) \nonumber
% \\& -Pr(\Tilde{Y}=y|A=a',Y=y)|\Big)\Big),
% \end{align}
% \vspace{-0.3in}
% \begin{align}
% \operatorname{DEO_{A}} \triangleq &~ \underset{y}{avg} \Big(\underset{a,a'}{max} \Big(|Pr(\Tilde{Y}=y|A=a,Y=y) \nonumber
% \\& -Pr(\Tilde{Y}=y|A=a',Y=y)|\Big)\Big).
% \end{align} 
We note that $\operatorname{DEO}$ is equivalent to the class-wise accuracy difference between groups over all classes. 
% $\operatorname{DEO_{M}}$ captures the maximum and $\operatorname{DEO_{A}}$ does the average. 
When there is a considerable discrepancy for a specific class, $\operatorname{DEO_{M}}$ is more useful than simply measuring the group accuracy difference. 
% which is widely used for various applications of the computer vision \cite{wang2020mitigating, gendershades}. 
On the contrary, since $\operatorname{DEO_{M}}$ only focuses on the worst unfairness, $\operatorname{DEO_{A}}$ is also a crucial measure to check the overall fairness across all classes. 

\section{Main Method}
% \begin{figure*}
%     \centering
%     \includegraphics[width=0.85\textwidth]{Fairness_KD/Figures/mmd_concept.pdf}
%     \caption{The illustrative concept of MFD. The student network treats all groups fairly while learning the teacher's knowledge by minimizing the MMD Loss between $P^{T}_y$ and $P^{S}_{a,y}$ for all $a\in\mathcal{A}$. Sample images are drawn from UTKFace dataset.}
%     \label{fig:fairkd_concept}
% \end{figure*}
In this section, we describe our MFD in details. 
% how MFD trains a neural network not to produce outputs biased towards specific groups via knowledge distillation.
Our aim is to train a fair \textit{student} model $S$, given a \textit{teacher} model $T$, which is trained merely considering the accuracy of the given task and could be unfairly biased. Moreover, similarly as in \cite{bornagain}, we only consider the case that the network structures of $S$ and $T$ are the same.
As mentioned in the Introduction, our underlying assumption is that despite being biased, $T$ could have also learned group-indistinguishable predictive features, hence, distilling those features to $S$ could achieve higher accuracy than the model re-trained from scratch with fairness constraints, while also improving the fairness over $T$.

% Our underlying assumption is that, although the pre-trained teacher focuses more on the biased information to proceed a given task, its features contain a variety of unbiased information to help the student perform the task better than training from scratch. For a successful fair distillation, the student should incorporate the unbiased knowledge into fair learning.

One straightforward method to achieve our goal is to simply introduce two regularization terms associated with KD and fairness, respectively. The typical regularization for KD employs the difference between the softmax outputs or features of $T$ and $S$, \textit{e.g.}, \textit{Kullback-Leibler divergence} \cite{hintonKD} or point-wise $L_2$ distance \cite{fitnet}. The regularization for fairness can be often specified by the correlation \cite{zafar2017fairness} or mutual information \cite{kamishima2012fairness}
of a model's output and the sensitive attribute, which targets the statistical independence. However, naively combining these two terms could lead to an additional trade-off between the knowledge distillation and fairness, which requires additional hyperparameter tuning between the terms. In contrast, we devise a novel \textit{single} regularizer that can simultaneously implement the knowledge distillation and fairness.  

% improves the accuarcy and fairness of 

% harmonically encompass the two components. 

% minimize the loss function with two regularization terms associated with knowledge distillation and fairness respectively. The regularization for knowledge distillation represents the difference between the softmax outputs or features of teacher and student, \textit{e.g.}, quantified by \textit{Kullback-Leibler divergence} \cite{hintonKD} or point-wise $L_2$ distance \cite{fitnet}. The regularization for fairness can be specified by the correlation \cite{zafar2017fairness} or mutual information \cite{kamishima2012fairness}
% of a model's output and the sensitive attribute, which targets the statistical independence. 

% However, combining these two terms naively can lead to an additional trade-off between learning the knowledge and being fair,
% % transferring the knowledge and creating fair model as well as
% and also needs an additional hyperparameter to control their strength. Therefore, we devise a novel \textit{single} regularizer that harmonically encompass the two components. 

\subsection{MMD-based Regularization for MFD}
We approach distillation by matching feature distributions of each model as in \cite{pkt}, rather than minimizing instance-wise distances, since the distributional perspective is more proper for considering the group fairness at the same time.
% For the successful fairness-considered distillation, we model our approach z  focus on the matching of the distributions rather than one of each instance, as in \cite{pkt}.
% and deal with groups of samples for equalized odds. 
To formulate our regularization, we use the \textit{maximum mean discrepancy} (MMD) \cite{gretton2012kernel},
which measures the largest difference in expectations over functions in the unit ball $\mathcal{F}$ of a reproducing kernel Hilbert space (RKHS) $\mathcal{H}$. For some distributions $\mathbf{p}$ and $\mathbf{q}$, MMD is defined as follows:
 \begin{align}
    \operatorname{D}(\mathbf{p},\mathbf{q})
    \triangleq & \underset{f \in \mathcal{F}}{\sup}
\Big(  \mathbb{E}_{\mathbf{p}}[f(x)] - \mathbb{E}_{\mathbf{q}}[f(x')] \Big) \label{eq:MMD_1}
%     \\ =& \underset{f \in \mathcal{F}}{\sup}
% \Big(  \mathbb{E}_{\mathbf{p}}[\langle \phi(x) ,f(x)\rangle_{\mathcal{H}}] - \mathbb{E}_{\mathbf{q}}[\langle \phi(x') ,f(x')\rangle_{\mathcal{H}}] \Big)
%     \\ =&  \underset{f \in \mathcal{F}}{\sup}
%   \langle \mu_{\mathbf{p}} - \mu_{\mathbf{q}}, f 
%   \rangle_{\mathcal{H}}
    \\ =& \| \mu_{\mathbf{p}} - \mu_{\mathbf{q}} \|_{\mathcal{H}}, \label{eq:MMD_2}
%     \\ =& \Big( \mathbb{E}_{\mathbf{p},\mathbf{p}} \langle \phi(x), \phi(x') \rangle_\mathcal{H} + \mathbb{E}_{\mathbf{q},\mathbf{q}} \langle \phi(x), \phi(x')\rangle_\mathcal{H} 
%   \\&- 2\mathbb{E}_{\mathbf{p},\mathbf{q}}\langle \phi(x),\phi(x')\rangle_\mathcal{H} \Big)^{\frac{1}{2}}
  %\\ =& \Big( \mathbb{E}_{\mathbf{p},\mathbf{p}} k(x,x') + \mathbb{E}_{\mathbf{q},\mathbf{q}} k(x,x') - 2\mathbb{E}_{\mathbf{p},\mathbf{q}} k(x,x') \Big)^{\frac{1}{2}}, \label{eq:MMD_3}
\end{align}
where $\mu_{\mathbf{p}} \triangleq \mathbb{E}_{\mathbf{p}}[\phi(x)]$, $\phi:\mathcal{X} \rightarrow \mathds{R}$ is defined as $\phi(\cdot) \triangleq k(\cdot, x)$ and $k(\cdot, \cdot)$ is a kernel inducing $\mathcal{H}$.
Under universal RKHS $\mathcal{H}$, MMD is a well-defined metric since it is proven that MMD value is zero if and only if $\mathbf{p} = \mathbf{q}$ \cite{gretton2012kernel}. In this work, we use the Gaussian RBF kernel, well-known as the kernel that induces a universal RKHS, \textit{i.e.}, $k(x,x')=\exp(-\frac{1}{2\sigma^2}\left\|x-x'\right\|^2)$.

% % Now, consider the group-unbiased features that a teacher might have.

The key for accomplishing our goal is to exploit and distill the group-indistinguishable features of the teacher $T$.
%To accomplish the goal under our motivation, the key is to exploit the group-indistinguishable features of the teacher.
It, however, is challenging to explicitly identify them in practice, and thus, we instead adopt a trick to use the per-class feature distribution of $T$ as a target to distill while learning the group-conditioned feature distribution of $S$. Namely, we define our regularization term as 
\begin{align}
      \mathcal{L}_{MFD} \triangleq \underset{y}{\sum}\underset{a}{\sum} \operatorname{D}^{2}(\mathbf{p}^{T}_{y}, \mathbf{p}^{S}_{a,y}), \label{eq:Loss_MFD}
\end{align}
in which $\mathbf{p}^{T}_{y}=\mathbb{E}_{A}[\mathbf{p}^{T}_{A, y}]$ is the group-averaged feature distribution of $T$ for class $y$, and $\mathbf{p}^{S}_{a,y}$ is the group-conditioned feature distribution of $S$ for class $y$ and the \textit{sensitive} group (attribute) $a$. The rationale behind using $\mathbf{p}^{T}_{y}$ as a target is that, by taking average across the groups, we expect the group-specific features would wash out while the common, group-agnostic predictive features would remain.

In Section \ref{subsection:analysis}, we give a theoretical analysis that minimizing $\mathcal{L}_{MFD}$ can simultaneously have the knowledge distillation effect and promote fairness of the student $S$. Namely, we show that it leads to assimilating $\mathbf{p}^T$ and $\mathbf{p}^S$ (thus, KD effect) and reduces the distances among $\mathbf{p}^{S}_{a,y}$ for all $a\in\mathcal{A}$ (thus, fairness effect) by having the common distillation target $\mathbf{p}_y^T$. Furthermore, we note that considering the \textit{class-wise} MMD in (\ref{eq:Loss_MFD}) fits well with the equalized odds metric that we consider.

% We can intuitively observe this since the 

% Note that we are considering the class-wise MMD for equalized odds
% % because our fairness criterion aims to the independence of the model's output and the sensitive attribute given a certain label, \textit{i.e.}, \textit{equalized odds}. 
% and minimizing $\mathcal{L}_{MFD}$ can simultaneously tackle the knowledge distiallation and fairness. That is, as we show in the theoretical analysis in Section \ref{subsection:analysis}, minimizing $\mathcal{L}_{MFD}$ leads to assimilating $\mathbf{p}^T$ and $\mathbf{p}^S$ (thus, knowledge distillation) as well as making 

% emphasize that the fairness and knowledge distillation are tackled by $\mathcal{L}_{MFD}$ simultaneously. This regularizer reduces the distance of $\mathbf{p}^{S}_{a,y}$ for all $a$ by making them closer to $\mathbf{p}^{T}_{y}$, 
% % 세모.
% meaning that $S$ follows the informative feature of $T$ and all group-conditioned feature distributions of $S$ become similar at the same time.
% % the conditional feature distribution of student become similar at the same time.
% % In other words, this implies the better fairness and the successful transfer of teacher's information. 
% The theoretical background of $\mathcal{L}_{MFD}$ will be described in Section \ref{subsection:analysis}.

\subsection{Objective Function}\label{subsection:objective}
Based on the rationale on $\mathcal{L}_{MFD}$ described above,
% can be used to achieve our two goals simultaneously, 
we design the final objective for training $S$ as follows:
% \begin{align}
%       \mathcal{C}(\mathbf{y_{\text{true}}},\mathbf{p}^{(L)}_{S}) + \frac{\lambda}{2} \underset{y}{\sum}\underset{a}{\sum} \operatorname{D}^{2}(\mathbf{p}^{T}_{y}, \mathbf{p}^{S}_{a,y}), \label{eq:Loss}
% \end{align}
\begin{align}
      \min_{\bm{\theta}}\, \mathcal{L}_{CE}(\bm{\theta}) + \frac{\lambda}{2} \hat{\mathcal{L}}_{MFD}(\bm{\theta}), \label{eq:Loss}
      %\underset{y}{\sum}\underset{a}{\sum} \operatorname{D}^{2}(\mathbf{p}^{T}_{y}, \mathbf{p}^{S}_{a,y}), 
\end{align}
where $\bm{\theta}$ is the model parameter for the student $S$. In Eq.(\ref{eq:Loss}), $\mathcal{L}_{CE}(\bm\theta)$ denotes the ordinary cross entropy loss, and $\lambda$ is a tunable hyperparameter that sets the trade-off between accuracy and fairness. $\hat{\mathcal{L}}_{MFD}(\bm\theta)\triangleq \sum_y\sum_a\widehat{\operatorname{D}}^{2}(\mathbf{p}^{T}_{y},\mathbf{p}^{S(\bm\theta)}_{a,y})$ is the empirical estimate of $\mathcal{L}_{MFD}$, 
in which the summand is defined as
\begin{align}
    &\widehat{\operatorname{D}}^{2}(\mathbf{p}^{T}_{y},\mathbf{p}^{S(\bm\theta)}_{a,y})%= \Big\| \frac{1}{N} \sum_{i=1}^{N} \phi (x_{i} )-\frac{1}{M} \sum_{i=1}^{M}
%\phi(x'_{i})\Big\|_{\mathcal{H}}^{2}
    = \frac{1}{N_{1}^{2}} \sum_{i=1}^{N_{1}} \sum_{j=1}^{N_{1}} k(x_i,x_{j}) \nonumber
    \\ &+ \frac{1}{N_{2}^{2}} \sum_{i=1}^{N_{2}} \sum_{j=1}^{N_{2}} k(x'_{i}(\bm\theta), x'_{j}(\bm\theta))
    - \frac{2}{N_{1}N_{2}} \sum_{i=1}^{N_{1}} \sum_{j=1}^{N_{2}} k(x_i,x'_j(\bm\theta)), \nonumber %\label{eq:empMMD}
\end{align}
where $x, x'(\bm\theta)$ are the feature vectors sampled according to $\mathbf{p}^{T}_{y}$ and $\mathbf{p}^{S(\bm\theta)}_{a,y}$, respectively. Here, $\hat{\mathcal{L}}_{MFD}$ can be applied to several layers of deep neural network, but we study only the case of applying to the penultimate layer throughout our work. In summary, (\ref{eq:Loss}) looks similar to the typical in-processing methods that employ additional fairness regularization, but our MFD also utilizes the information from the teacher $T$. Finally, we give a pictorial summary of our method in Figure \ref{fig:fairkd_concept}.

\noindent\textbf{Mini-batch optimization}\ \ For the mini-batch stochastic descent, a standard optimization method for neural networks, we calculate the $(a, y)$-pairwise MMD using data points in a mini-batch. But, for a certain group-label pair ($a,y$), the mean of the pair's conditional distribution in MMD can be biased if a mini-batch has few points for the pair. Hence, we strategically sample the data points with replacement to make a mini-batch in which the data points for each pair are contained with the same proportion. 
% For a more critical detail, 
Furthermore, we set the kernel parameter $\sigma^2$ as the mean of squared distance between all data points for each pair to maintain the stability.
\subsection{Analysis}\label{subsection:analysis}
%%%%%%%%%%%%%%%%%%%%%%%%%%%%%%%%%%%%%%%%%%%%%%%
\newtheorem{theorem}{Theorem}
\newtheorem{lemma}{Lemma}
\newenvironment{proof}{\paragraph{Proof :}}{\hfill}%$\square$}
%%%%%%%%%%%%%%%%%%%%%%%%%%%%%%%%%%%%%%%%%%%%%%%
% In the previous section, we advocated for the use of MFD since it encompasses knowledge distillation and fairness. %Now, we address the reason for that by giving theoretical analysis on the MFD.
% Now, 
In this section, we give a theoretical justification of our MFD.
% fundamental explanations on MFD via theoretical analyses.
% In this section, we further give fundamental explanations for the support on
%For simplicity, we only discus the MMD loss which is the second term of (\ref{eq:Loss}).
We first show that minimizing $\mathcal{L}_{MFD}$ leads to distributional matching of $T$ and $S$. 
%MMD between feature distribution of $T$ and $S$ (RHS of (\ref{eq:lemma_1})) can be interpreted as knowledge distillation in the sense that it matches the distribution among them.
% 
% 
% 
%Lemma1
\begin{lemma}
(Knowledge Distillation) 
\begin{align}
\underset{y}{\sum}\underset{a}{\sum} p(a,y)
\operatorname{D}^{2}(\mathbf{p}^{T}_{y}, \mathbf{p}^{S}_{a,y}) \geq \operatorname{D}^{2}(\mathbf{p}^{T}, \mathbf{p}^{S}). \label{eq:lemma_1}
\end{align}
\label{lemma_1}
\vspace{-0.3in}
\end{lemma}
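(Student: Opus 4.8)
The plan is to reduce the inequality (\ref{eq:lemma_1}) to Jensen's inequality for the convex map $v\mapsto\|v\|_{\mathcal H}^2$, exploiting only the Hilbert-space representation of MMD in Eq.~(\ref{eq:MMD_2}) and the linearity of the kernel mean embedding $\mu_{\mathbf p}=\mathbb E_{\mathbf p}[\phi(x)]$ in its argument $\mathbf p$; the universality of the RKHS is not needed for this step.

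First I would set $\mu^T_y\triangleq\mu_{\mathbf p^T_y}$, $\mu^S_{a,y}\triangleq\mu_{\mathbf p^S_{a,y}}$, $\mu^T\triangleq\mu_{\mathbf p^T}$ and $\mu^S\triangleq\mu_{\mathbf p^S}$, so that $\operatorname{D}^2(\mathbf p^T_y,\mathbf p^S_{a,y})=\|\mu^T_y-\mu^S_{a,y}\|_{\mathcal H}^2$ and $\operatorname{D}^2(\mathbf p^T,\mathbf p^S)=\|\mu^T-\mu^S\|_{\mathcal H}^2$ by (\ref{eq:MMD_2}). Next I would record the mixture identities: since the distillation target $\mathbf p^T_y$ is shared across all $a$, we have $\mathbf p^T=\sum_y p(y)\,\mathbf p^T_y=\sum_y\sum_a p(a,y)\,\mathbf p^T_y$ and $\mathbf p^S=\sum_y\sum_a p(a,y)\,\mathbf p^S_{a,y}$; applying the expectation $\mathbb E_{(\cdot)}[\phi]$ and using its linearity yields $\mu^T=\sum_y\sum_a p(a,y)\,\mu^T_y$ and $\mu^S=\sum_y\sum_a p(a,y)\,\mu^S_{a,y}$. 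Subtracting,
\[
\mu^T-\mu^S=\sum_y\sum_a p(a,y)\big(\mu^T_y-\mu^S_{a,y}\big),
\]
which is a convex combination of the vectors $\mu^T_y-\mu^S_{a,y}\in\mathcal H$, since $\sum_{a,y}p(a,y)=1$.

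Finally I would invoke Jensen's inequality (equivalently, convexity of the squared norm):
\[
\operatorname{D}^2(\mathbf p^T,\mathbf p^S)=\Big\|\sum_y\sum_a p(a,y)\big(\mu^T_y-\mu^S_{a,y}\big)\Big\|_{\mathcal H}^2\le\sum_y\sum_a p(a,y)\,\big\|\mu^T_y-\mu^S_{a,y}\big\|_{\mathcal H}^2=\sum_y\sum_a p(a,y)\,\operatorname{D}^2(\mathbf p^T_y,\mathbf p^S_{a,y}),
\]
which is precisely the asserted bound. There is no substantive obstacle here; the only point that needs care is stating the mixture representations of $\mathbf p^T$ and $\mathbf p^S$ correctly — in particular, that the teacher target $\mathbf p^T_y$ is independent of $a$, so that averaging it against $p(a,y)$ reproduces $\mathbf p^T$ — after which the result is immediate from convexity of $\|\cdot\|_{\mathcal H}^2$.
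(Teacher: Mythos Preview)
Your proof is correct and slightly cleaner than the paper's own argument. Both routes rest on the same two ingredients --- the mixture decomposition $\mathbf p^{T}=\sum_{a,y}p(a,y)\mathbf p^{T}_{y}$, $\mathbf p^{S}=\sum_{a,y}p(a,y)\mathbf p^{S}_{a,y}$ and Jensen's inequality --- but they apply them to different representations of MMD. You work with the Hilbert-space form (\ref{eq:MMD_2}), pass the mixture through the linear map $\mathbf p\mapsto\mu_{\mathbf p}$, and invoke convexity of $\|\cdot\|_{\mathcal H}^{2}$ once. The paper instead works with the variational form (\ref{eq:MMD_1}) and needs two inequalities: Jensen for $x\mapsto x^{2}$ to pull the weighted average inside the square, and then subadditivity of the supremum to pull it inside the $\sup_{f\in\mathcal F}$. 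Your approach collapses these into a single convexity step and makes transparent exactly where the mixture identity for $\mathbf p^{T}$ (namely that $\mathbf p^{T}_{y}$ does not depend on $a$) is used; the paper leaves that implicit in the final identification with $\operatorname{D}^{2}(\mathbf p^{T},\mathbf p^{S})$.
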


\begin{proof}
The proof follows from the following chain of inequalities
\allowdisplaybreaks
\begin{align}
&\underset{y}{\sum}\underset{a}{\sum} p(a,y)
\operatorname{D}^{2}(\mathbf{p}^{T}_{y}, \mathbf{p}^{S}_ {a,y}) \nonumber
\\ =& \underset{y}{\sum}\underset{a}{\sum} p(a,y)  \Big( \underset{f \in \mathcal{F}}{\sup}
\Big(  \mathbb{E}_{\mathbf{p}^{T}_{y}}[f(x)] - \mathbb{E}_{\mathbf{p}^{S}_{a,y}}[f(x')] \Big) \Big)^{2} \nonumber \\ %\label{eq:pf_lemma_1_1}  
\geq & \Big( \underset{y}{\sum}\underset{a}{\sum} p(a,y)   \underset{f \in \mathcal{F}}{\sup}
\Big(  \mathbb{E}_{\mathbf{p}^{T}_{y}}[f(x)] - \mathbb{E}_{\mathbf{p}^{S}_{a,y}}[f(x')] \Big) \Big)^{2} \nonumber %\label{eq:pf_lemma_1_1}
\\ \geq & \Big( \underset{f \in \mathcal{F}}{\sup}
\Big( \underset{y}{\sum}\underset{a}{\sum} p(a,y) \Big( \mathbb{E}_{\mathbf{p}^{T}_{y}}[f(x)] - \mathbb{E}_{\mathbf{p}^{S}_{a,y}}[f(x')] \Big) \Big) \Big)^{2} \nonumber %\label{eq:pf_lemma_1_2}
% \\ = & \underset{f \in \mathcal{F}}{\sup}
% \Big( \underset{y}{\sum}\underset{a}{\sum} p(a,y)  \mathbb{E}_{\mathbf{p}_{T,y}}[f(x)] \nonumber
% \\ &- \underset{y}{\sum}\underset{a}{\sum} p(a,y)  \mathbb{E}_{\mathbf{p}_{S,a,y}}[f(x')] \Big) \nonumber
% \\ = & \underset{f \in \mathcal{F}}{\sup}
% \Big(  \mathbb{E}_{\mathbf{p}_{T}}[f(x)] - \mathbb{E}_{\mathbf{p}_{S}}[f(x')] \Big) \nonumber
\\ = & \operatorname{D}^{2}(\mathbf{p}^{T}, \mathbf{p}^{S}), %\tag*{$\rule{1.5ex}{1.5ex}$} % \label{eq:pf_lemma_1_3}
%\text{\hfil $\square$}
\end{align}
in which the first inequality follows from $x^2$ being an increasing convex function for $x \geq 0$ and using Jensen's inequality, and the second inequality follows from the subadditivity of supremum. $\blacksquare$
%We have the following chain of equations.
% Since $x^2$ is an increasing convex function for $x \geq 0$, we can use Jensen's inequality to show the below first inequality. Then, from the subadditivity of supremum, we have the second inequality.
% in which the first equality (\ref{eq:pf_lemma_1_1}) follows from the definition of MMD (\ref{eq:MMD_1}), and (\ref{eq:pf_lemma_1_2}) follow from the subadditivity of supremum. The second axiom of probability and the definition of conditional probability gives (\ref{eq:pf_lemma_1_3}).
\end{proof} 
% Although $\mathcal{L}_{MFD}$ is not the same as the LHS of Eq. (\ref{eq:lemma_1}), there is the highly positive correlation between two terms. 
% Assuming $p(a,y)$ is a uniform distribution, $\mathcal{L}_{MFD}$ becomes the upper bound of the distributional matching.

Note the LHS of Eq.(\ref{eq:lemma_1}) is equivalent to $\mathcal{L}_{MFD}$ when $p(a,y)$ is a uniform distribution. Therefore, the lemma shows that minimizing $\mathcal{L}_{MFD}$ would also lead to the feature distribution of $S$ get close to that of $T$, which is the knowledge distillation process.

% We also note that the RHS of Eq.(\ref{eq:lemma_1}) can be used as a regularizer to transfer knowledge from teacher to student by minimizing the distance between the feature distributions of teacher and student \cite{pkt}. Then, Lemma \ref{lemma_1} tells us that optimizing $\mathcal{L}_{MFD}$ allows the student to use the teacher as a milestone for performing the given task.

% transfer information from $T$ to $S$ by optimizing $\mathcal{L}_{MFD}$
% we can expect the distribution $S$ follows that of $T$.

Next, we investigate the relation between the $\mathcal{L}_{MFD}$ and the equalized odds by introducing the following lemma.
\begin{lemma}
(Fairness Constraints) For every $y \in \mathcal{Y}$, 
\begin{align}
 &\underset{a}{\sum} \operatorname{D}^{2}(\mathbf{p}^{T}_{y}, \mathbf{p}^{S}_{a,y}) 
 \geq \frac{1}{2|\mathcal{A}|} \underset{a,a'}{\sum} \operatorname{D}^{2}(\mathbf{p}^{S}_{a,y}, \mathbf{p}^{S}_{a',y}). \label{eq:lemma_2}
\end{align}
\label{lemma_2}
\vspace{-0.3in}
\end{lemma}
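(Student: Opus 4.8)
The plan is to reduce (\ref{eq:lemma_2}) to an elementary statement about finitely many points in the RKHS $\mathcal{H}$ by way of the mean-embedding identity (\ref{eq:MMD_2}), and then to invoke the Hilbert-space ``variance decomposition'' about the centroid. Concretely, fix $y \in \mathcal{Y}$ and write $t \triangleq \mu_{\mathbf{p}^{T}_{y}}$ and $m_a \triangleq \mu_{\mathbf{p}^{S}_{a,y}}$ for each $a \in \mathcal{A}$. By (\ref{eq:MMD_2}) the left-hand side of (\ref{eq:lemma_2}) is $\sum_a \|m_a - t\|_{\mathcal{H}}^2$ and the right-hand side is $\frac{1}{2|\mathcal{A}|}\sum_{a,a'}\|m_a - m_{a'}\|_{\mathcal{H}}^2$, so the lemma becomes a claim about a finite collection of vectors $\{m_a\}_{a\in\mathcal{A}}$ and an arbitrary reference vector $t$ in a Hilbert space.

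First I would introduce the averaged embedding $\bar m \triangleq \frac{1}{|\mathcal{A}|}\sum_a m_a$ and record the Pythagorean identity $\sum_a \|m_a - t\|_{\mathcal{H}}^2 = \sum_a \|m_a - \bar m\|_{\mathcal{H}}^2 + |\mathcal{A}|\,\|\bar m - t\|_{\mathcal{H}}^2$, which follows by writing $m_a - t = (m_a - \bar m) + (\bar m - t)$, expanding the squared norm, and using $\sum_a (m_a - \bar m) = 0$ to kill the cross term. Dropping the nonnegative term $|\mathcal{A}|\,\|\bar m - t\|_{\mathcal{H}}^2$ gives $\sum_a \|m_a - t\|_{\mathcal{H}}^2 \ge \sum_a \|m_a - \bar m\|_{\mathcal{H}}^2$; this is exactly the statement that the group-averaged teacher embedding behaves as the squared-error-optimal centroid of the student embeddings, so using it as the common distillation target is harmless for this bound.

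Second I would relate $\sum_a \|m_a - \bar m\|_{\mathcal{H}}^2$ to the pairwise sum by expanding $\sum_{a,a'}\|m_a - m_{a'}\|_{\mathcal{H}}^2 = 2|\mathcal{A}|\sum_a \|m_a\|_{\mathcal{H}}^2 - 2\big\|\textstyle\sum_a m_a\big\|_{\mathcal{H}}^2 = 2|\mathcal{A}|\big(\sum_a \|m_a\|_{\mathcal{H}}^2 - |\mathcal{A}|\,\|\bar m\|_{\mathcal{H}}^2\big) = 2|\mathcal{A}|\sum_a \|m_a - \bar m\|_{\mathcal{H}}^2$. Chaining this with the previous display yields $\sum_a \|m_a - t\|_{\mathcal{H}}^2 \ge \frac{1}{2|\mathcal{A}|}\sum_{a,a'}\|m_a - m_{a'}\|_{\mathcal{H}}^2$, which is precisely (\ref{eq:lemma_2}) after translating back through (\ref{eq:MMD_2}).

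There is no genuine obstacle here; the one thing to watch is resisting the coarser triangle-inequality route, $\operatorname{D}(\mathbf{p}^{S}_{a,y},\mathbf{p}^{S}_{a',y}) \le \operatorname{D}(\mathbf{p}^{S}_{a,y},\mathbf{p}^{T}_{y}) + \operatorname{D}(\mathbf{p}^{T}_{y},\mathbf{p}^{S}_{a',y})$ followed by $(u+v)^2 \le 2u^2 + 2v^2$, which is valid but loses a factor of two and only delivers the weaker constant $\frac{1}{4|\mathcal{A}|}$. The sharp constant $\frac{1}{2|\mathcal{A}|}$ comes precisely from working inside $\mathcal{H}$ and using the Pythagorean decomposition about the mean, which is available because $\operatorname{D}^2$ is a squared Hilbert-space norm rather than a generic squared metric. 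Finally, I would note that the argument never uses any structure of $\mathbf{p}^{T}_{y}$ beyond the existence of its mean embedding $\mu_{\mathbf{p}^{T}_{y}}\in\mathcal{H}$, so the inequality in fact holds with any target distribution in place of $\mathbf{p}^{T}_{y}$.
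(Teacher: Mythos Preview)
Your proposal is correct and follows essentially the same route as the paper: translate via (\ref{eq:MMD_2}) to mean embeddings in $\mathcal{H}$, lower-bound by replacing $\mu_{\mathbf{p}^{T}_{y}}$ with the centroid $\bar m=\frac{1}{|\mathcal{A}|}\sum_a m_a$ (the paper phrases this as ``$\mu^*_y$ is the minimizer''), and then invoke the identity $\sum_{a,a'}\|m_a-m_{a'}\|_{\mathcal{H}}^2 = 2|\mathcal{A}|\sum_a\|m_a-\bar m\|_{\mathcal{H}}^2$. Your write-up simply makes explicit the Pythagorean decomposition and the pairwise-sum expansion that the paper states without proof, and your side remarks on the triangle-inequality route and on the irrelevance of the specific target $\mathbf{p}^T_y$ are accurate additions.
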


\begin{proof} Consider the followings:
\allowdisplaybreaks
\begin{align}
 \underset{a}{\sum} \operatorname{D}^{2}(\mathbf{p}^{T}_{y}, \mathbf{p}^{S}_{a,y})
 =& \underset{a}{\sum} \Big\| \mu_{\mathbf{p}^{T}_{y}}-\mu_{\mathbf{p}^{S}_{a,y}}\Big\|_{\mathcal{H}}^{2} \label{eq:pf_lemma_2_1}
 \\ \geq & \underset{a}{\sum} \Big\| \mu^{*}_{y}-\mu_{\mathbf{p}^{S}_{a,y}}\Big\|_{\mathcal{H}}^{2} \label{eq:pf_lemma_2_2}
 \\ = & \frac{1}{2|\mathcal{A}|} \underset{a,a'}{\sum} \Big\| \mu_{\mathbf{p}^{S}_{a,y}}-\mu_{\mathbf{p}^{S}_{a',y}}\Big\|_{\mathcal{H}}^{2} \label{eq:pf_lemma_2_3}
 \\ = & \frac{1}{2|\mathcal{A}|} \underset{a,a'}{\sum} \operatorname{D}^{2}(\mathbf{p}^{S}_{a,y}, \mathbf{p}^{S}_{a',y}) \nonumber , %\tag*{$\rule{1.5ex}{1.5ex}$}
%  \label{eq:pf_lemma_2_4},
\end{align}
in which Eq.(\ref{eq:pf_lemma_2_2}) follows from the fact that each $\mu^{*}_{y} \triangleq \frac{1}{|\mathcal{A}|}\underset{a}{\sum}\mu_{\mathbf{p}^{S}_{a,y}}$ is the minimizer of each summand of Eq.(\ref{eq:pf_lemma_2_1}).
Eq.(\ref{eq:pf_lemma_2_3}) follows from the equivalence between the sum of pairwise distance and the sum of distance to their mean. $\blacksquare$ \nonumber 
\end{proof}
% \hfill $\rule{1.5ex}{1.5ex}$
%\vspace{-0.1in}

From Lemma \ref{lemma_2}, we have that $\underset{y}{\sum}\underset{a,a'}{\sum}\operatorname{D}^{2}(\mathbf{p}^{S}_{a,y}, \mathbf{p}^{S}_{a',y})$ is upper bounded by $\mathcal{L}_{MFD}$ and equality holds when $\mu_{\mathbf{p}^{T}_{y}} = \mu^{*}_{y}$ for all $y$.
% Here, 
When the global optimum is achieved, \emph{i.e.}, $\mathcal{L}_{MFD}=0$, we get that $\mathbf{p}^{S}_{a,y}$ is the same as $\mathbf{p}^{S}_{a',y}$ for all $a,a'\in \mathcal{A}, y\in\mathcal{Y}$,
which implies the independence between feature distribution of groups for given $y$, leading to the equalized odds condition,
$\Tilde{Y} \perp A | Y=y$.

\section{Experimental Results}

In the following section, 
we investigate our MFD can indeed reduce per-class accuracy discrepancy and improve accuracy 
% how much our suggested algorithm can reduce the average and maximum discrepancy of accuracy between different groups of given classes, \textit{i.e.}, $\operatorname{DEO_{A}}, \operatorname{DEO_{M}}$, 
in various object classification scenarios. 
% To check the effect of our distillation process, we compare our algorithm with other distillation methods, and a fairness-specific baseline in terms of the overall accuracy over classes and the maximum discrepancy between groups of given classes, 
We first consider a toy dataset, CIFAR-10S \cite{wang2020towards}, and then experiment on two real-world datasets; age classification using UTKFace \cite{utkface} and face attribute recognition using CelebA \cite{celeba}. We describe the detailed experimental settings in the corresponding subsections.

% an object classification using a toy dataset, CIFAR-10S \cite{wang2020towards}, and two real world scenario; age classification using UTKFace \cite{utkface} and face attribute recognition using CelebA \cite{celeba}. We describe the detailed experimental settings in the corresponding subsection.
% \begin{equation}
% $=\sup_{f \in \mathcal{F}}\Big(\mathbb{E}_{\mathbf{p}}[f(X)]-\mathbb{E}_{\mathbf{q}}[f(Y)]\Big)^{2}$
%     max_{c} Pr(\hat{Y}=c 
% \end{equation}}

\noindent\textbf{Baselines.}\ \ 
We compare our MFD with three classes of baselines. The first class is the ordinary KD methods, HKD \cite{hintonKD}, FitNet \cite{fitnet}, AT \cite{at}, and NST \cite{nst}, that purely focus on improving the prediction accuracy via distillation.
% and we show that they indeed achieve improved accuracy but may \textit{hurt} the fairness of the original model after distillation.
The second class is the state-of-the-art in-processing methods, AD \cite{adv_debiasing} and SS \cite{oversampling}, that explicitly take the fairness criterion into account while re-training the model. As described in Section \ref{subsection:objective}, MFD also uses the same sampling strategy as SS, 
% for stable learning, 
but we show through our experiments that merely controlling the ratio of group data points in a mini-batch can fail to reduce the unwanted discrimination of a model. The third class of baselines is the simple combination of the first two classes; namely, we combine the in-processing methods, AD or SS, with the KD methods, HKD or FitNet, by simply adding the distillation regularization terms to the objective functions of the in-processing methods. We show in our experiments that our MFD shows \textit{consistent }improvements in \textit{both} accuracy and fairness across \textit{all three datasets}, while all other baselines cannot always improve both criteria on all datasets.

\noindent\textbf{Implementation details}\ \ For CIFAR-10S, we employed a simple convolutional neural network. Details of the network architecture is described in the Supplementary Material.
% we employed a simple convolutional neural network having six convolutional layers with the kernel size of 3$\times$3, followed by two fully connected hidden layers with ReLU \cite{nair2010rectified} activations. The number of channels was set to 32, 32, 64, 64, 128, and 128 for each convolutional layer, respectively. Dropout \cite{srivastava2014dropout} and max-pooling were applied  after every two convolutional layers. 
For UTKFace \cite{utkface} and CelebA \cite{celeba},
we adopted ImageNet-pretrained ResNet18 \cite{resnet} and ShuffleNet \cite{shufflenet}, respectively.
% which are pre-trained on ImageNet \cite{(Imagenet)Deng09}. 
% We resized all the images to 176$\times$176 from 200$\times$200.
All algorithms were reproduced following the original papers using PyTorch \cite{pytorch}.
%and we traine modes for 50 epochs with a mini-batch size of 128 and Adam \cite{adam} optimizer with a learning rate of 0.001.
% LBC trains models for the sequence of 5 whole training of 30 epochs. 
% We implement MFD by PyTorch \cite{pytorch}, and Adam \cite{adam} optimizer is used with a learning rate of 0.001 through all experiments. 
% All distillation baselines are implemented based on the original papers, and 
Feature distillation was applied at the penultimate layer for all methods except for AT. Since AT is originally designed to transfer attention maps, we applied it to the feature after the last convolutional layer of the networks in each experiment. For AD, we omitted the loss projection in their original work due to the training instability in our experiments. We did the grid search for the hyperparameters of all methods sufficiently and chose the best one in terms of accuracy for the first class of baselines and $\operatorname{DEO_{M}}$ for the second and third classes of baselines. We excluded the cases for which models achieve severely low accuracy despite their low $\operatorname{DEO_{M}}$. More details on training schemes and full hyperparameter settings are given in the Supplementary Material.

\subsection{Synthetic Dataset}\label{sec_5_1}

% Many studies showed that the existing datasets and neural networks are strongly biased towards Caucasian \cite{wang2020mitigating, fairface, RFW} in face recognition. In other real world tasks such as gender or age classification, on the other hand, accuracies of Caucasians are not always dominate in every class. Namely, models reveal accuracy disparities between different race groups depends on which gender or age to be predicted  \cite{fairface, das2018mitigating}. This appears to there are latent features in the images of different race groups which are highly correlated to tasks and this advantage some groups in specific classes. For models to be fair, however, it should be studied which they can overcome the performance imparity caused by the existence of those latent features. To that end, we rather simplify the setting exploiting CIFAR-10 \cite{cifar10} and investigate \textit{unfairness} occuring in various methods.

\noindent\textbf{Dataset}\ \ We adopted the CIFAR-10 Skewed (CIFAR-10S) dataset \cite{wang2020towards} which is a modified version of CIFAR-10 \cite{cifar10} in order to study bias mitigation in object classification. 
CIFAR-10 is a 10-way image classification dataset composed of 32$\times$32 images.
% CIFAR-10 \cite{cifar10} is an 10-way image classification dataset to predict which object exists in a given 32$\times$32 image.
In \cite{wang2020towards}, the images of each class in the dataset are divided into two new domains (\textit{i.e.}, two groups) of color and grayscale with a fixed ratio. They make the images of the first 5 classes be skewed towards color domain and the others towards grayscale, so that the total number of images belonging to each domain is balanced. Since each class data is skewed towards a specific domain, 
% the presence of color makes models have biased information about the classes and 
the extent of bias can be easily controlled by the skew ratio.
Based on their protocol, we built CIFAR-10S dataset with the skewed ratio of 0.8. More specifically, from 5,000 per-class training images of the standard CIFAR-10, we set 4,000 images to grayscale and 1,000 to color for the first five classes and vice versa for the others.
% the standard CIFAR-10 has 5000 images per class, and we set 4000 images to grayscale and 1000 to color for the first five classes. For the other five classes, 4000 and 1000 images were allocated to color and grayscale respectively.
For the test set, we doubled the original CIFAR-10 test set by converting the images to grayscale and combining with the colored set, thereby we balanced the test set between two domains and have the pair of the same images; one in color, one in grayscale. 
%The main advantage to adopting this dataset is that the domain information is definite, and thus the exact counter-domain examples can be generated with ease. In the following section, this enables us to validate our hypothesis in diverse experimental setting.
% In this dataset, the more model rely on domain knowledge to make predictions, the larger unfairness model would reveal, \textit{i.e.}, higher DEO. 

\begin{figure}[t]
%\vspace{-0.2in}
\begin{center}
  \includegraphics[width=0.85\linewidth]{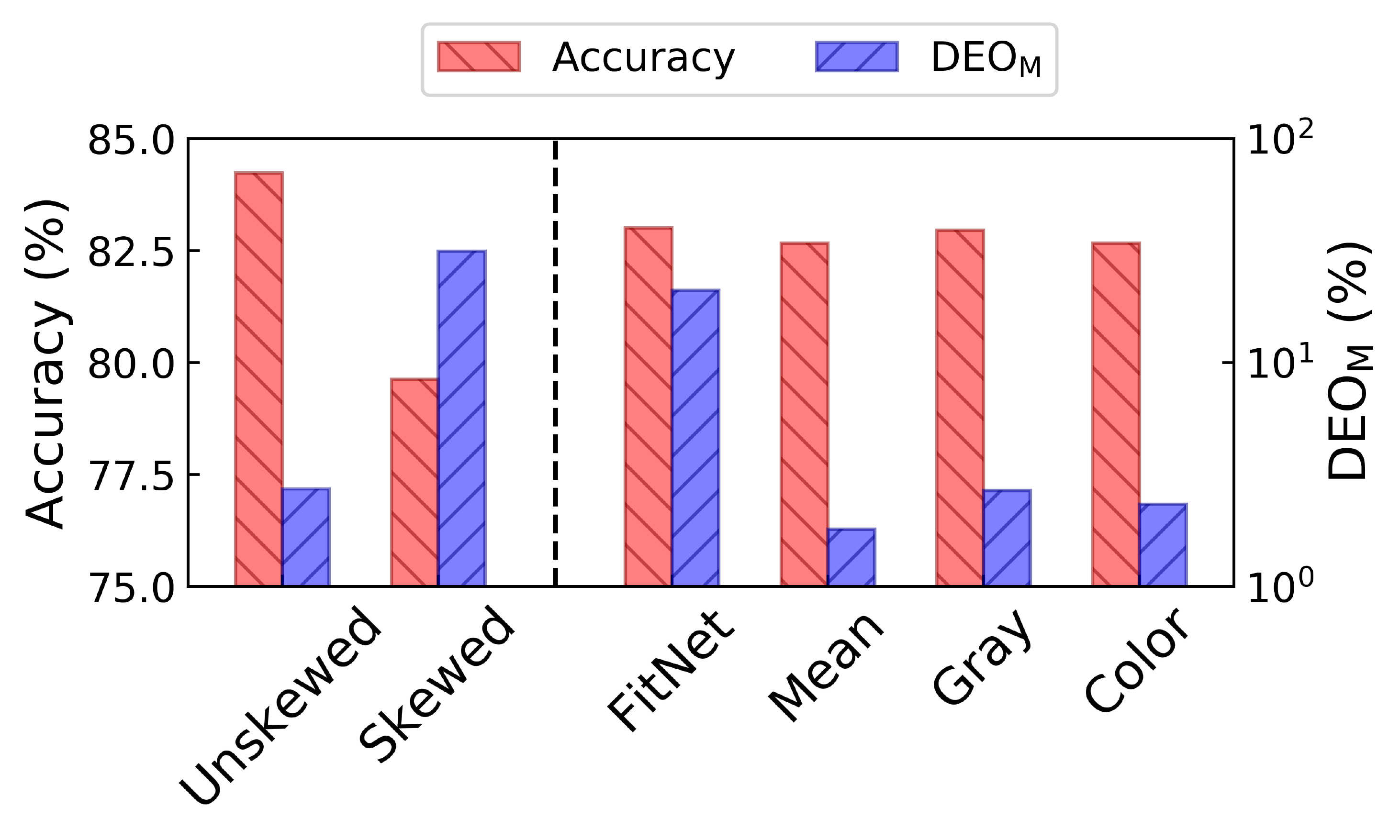}
\end{center}
\vspace*{-5mm}
  \caption{The effect of distillation using different feature information. Transferring the mean features of two domains helps the most in achieving fairness. $\operatorname{DEO_M}$ (in \%) is reported in log scale.}
\label{fig:motivation_bar}
\vspace{-0.2in}
\end{figure}

\noindent\textbf{Validating our motivation}\ \ 
Before testing the performance of our MFD on CIFAR-10S, we first carry out a toy experiment to validate our motivation described in the Introduction; namely, only distilling the group-indistinguishable predictive features from the teacher $T$ to the student $S$ should help improve both the accuracy and fairness of $S$.  

To that end, we implemented an ideal \textit{Unskewed }teacher and tested with a different choice of distilling features.
% Before we identify the performance of our algorithm, we revisit our motivation described in the Section \ref{intro}; transferring merely the group-indistinguishable informative features can help a student achieve fairness.
% The main concern here is how we can acquire such features. 
% In practice, it's not easy to figure out what group-unbiased informative features are, and how they can be obtained.
% In practice, it is not easy to get a totally fair teacher that only produces the group-indistinguishable informative features. 
% To circumvent this, instead of trying to train the fair teacher, we attempt to approximate such ideal features via newly constructing a dataset and using some tricks to a distillation. 
For more details, we first constructed a Composite CIFAR-10 dataset that contains the original CIFAR-10 train set and the its grayscale version.
% The benefits of this composite CIFAR-10 are that there are image pairs which are exactly the same except the group for all instances (dubbed as \textit{counter-domain} example pairs).
% In order to develop the intuition for our motivation, the following assumptions are made for composite CIFAR-10: in the hypothetical group axis (i.e., color-related axis), the teacher features are divided by group, with the ideal features in the middle; student features will be assigned to the teacher's when they are naively trained, and the direction towards the feature of the counter-domain sample is somewhat similar to the ideal feature direction.
% According to above assumptions, it is expected that when student receives distilled information of a data instance for some particular group (e.g., a colored image), utilizing feature obtained through the teacher with counter-domain instance (e.g., grayed same image) will help the student learn the group-unbiased informative feature.
Then, we trained two models from scratch with the Composite CIFAR-10 and CIFAR-10S, denoted as \textit{Unskewed} and \textit{Skewed}, respectively, in Figure \ref{fig:motivation_bar}. 
Note that the \textit{Unskewed} model does not suffer from unfairness since it is trained on the balanced training set, while the \textit{Skewed} model suffers from very high $\operatorname{DEO_{M}}$ due to the imbalance in CIFAR-10S described above.
% , which shows low $\operatorname{DEO_{M}}$ compared to the model using CIFAR-10S (ratio 0.8), denoted as \textit{Skewed}. }
% Employing the \textit{Unskewed} as a teacher, we investigate whether a model can further reduce the remaining unfairness with counter-domain examples.
Now, setting the \textit{Unskewed} model as the teacher $T$ for the knowledge distillation, we trained four students $S$ by fixing each student's input as CIFAR-10S and changing the teacher's input to the following four choices:
% we simply use the naive FitNet \cite{fitnet} method which minimizes the $L_2$ distance between features from given data, and consider the following four different approaches:
1) providing the same images as the student's input (\textit{FitNet}), 
2) providing the color and grayscale image pair that corresponds to the student's input, then distilling the mean of the two features (\textit{Mean}), 
3) providing only the grayscale version of the image that corresponds to the student's input (\textit{Gray}), 
and 4) providing only the original color image that corresponds to the student's input (\textit{Color}).
We note that for the knowledge distillation, all approaches minimize $L_2$ distance between features of the teacher and the student as in FitNet \cite{fitnet}. Here, \textit{Mean} is intended to approximate the distillation with the group-indistinguishable informative features. \textit{Gray} and \textit{Color} are meant to further identify the effects on the student following the teacher's features of one specific domain.

In Figure \ref{fig:motivation_bar}, 
we observe that all four methods utilizing the teacher's knowledge succeed in improving the accuracy compared to the \noindent\textit{Skewed}. Interestingly, \textit{Mean}, \textit{Gray} and \textit{Color} also make significant improvements in fairness compared to \noindent\textit{Skewed}, which just trains from scratch only using CIFAR-10S. Note that \textit{Mean} achieves the lowest $\operatorname{DEO_{M}}$, and we believe the reason for this improvement is that the unbiased, group-indistinguishable feature obtained by the mean feature from the teacher successfully mitigates the biased information, in line with our motivation given in the Introduction. 
% useful knowledge approximated from the mean features is transferred into the student.
% , implying that our motivation is valid enough.
% In addition, we also believe the fairness gains of \textit{Gray} and \textit{Color} are due to only providing the data points that have homogeneous sensitive attribute to the teacher, so that the group-indistinguishable feature from \textit{Unskewed} teacher can be distilled to the student. 
In addtition, we also believe the fairness gains of \textit{Gray} and \textit{Color} occur because providing the images of opposite domain for the half of CIFAR-10S train set has the effect of bias mitigation through distillation, so that the group-indistinguishable feature from \textit{Unskewed} teacher can be distilled to the student.
However, we also note that the amount of fairness improvement is smaller than \textit{Mean}. In contrast, \textit{FitNet} still suffers from high $\operatorname{DEO_{M}}$ despite the accuracy improvement, which exemplifies that a naive knowledge distillation may not be effective in mitigating the unfairness. Encouraged by this result, we now evaluate the performance of MFD on CIFAR-10S.

\begin{table}
\caption{The comparison of algorithms on CIFAR-10S dataset. The red and green arrows indicate that the performance got worse and better compared to the teacher, respectively. 
% arrows indicate that performance has been worse compared to the result of Teacher in corresponding measurement, and the green ones indicate the measurement has been improved.
The numbers in parentheses represent how much they are changed from the value of the teacher, \textit{i.e.}, relative change in percentage ($\%$).}
\centering
{\small
%\begin{tabular}{c|c|c|c} 
%\begin{tabular}{50\textwidth}{@{\extracolsep{\fill}}|c|c|c} 
\resizebox{230pt}{!}
{%
\begin{tabular}{c|c|c|c} 
\hline

\multirow{2}{*}{Model} & \multirow{2}{*}{\begin{tabular}[c]{@{}c@{}}Accuracy (\textbf{$\uparrow$})\\\end{tabular}} & \multirow{2}{*}{\begin{tabular}[c]{@{}c@{}}$\operatorname{DEO_{A}}$ (\textbf{$\downarrow$})\\\end{tabular}} & \multirow{2}{*}{\begin{tabular}[c]{@{}c@{}}$\operatorname{DEO_{M}}$ (\textbf{$\downarrow$})\\\end{tabular}}   \\ 
                       &                                                                                                   &                         &                           \\ 
%\ Model                       &  Accuracy (\textbf{$\uparrow$})                                                                                                 & DEO_{A}(\textbf{$\downarrow$})                         & DEO_{M}(\textbf{$\downarrow$})                           \\ 

\hline\hline
Teacher                & 79.62                                                                                             & 15.63                    & 31.32                         \\ 
\hline
HKD \cite{hintonKD}                   & 80.34 (0.90 \textcolor{green}{\textbf{$\uparrow$}})                                               & 15.54 (0.58 \textcolor{green}{\textbf{$\downarrow$}}) & 34.12 (8.94 \textcolor{red}{\textbf{$\uparrow$}})    \\
FitNet  \cite{fitnet}               & 81.66 (2.56 \textcolor{green}{\textbf{$\uparrow$}})                           & 14.83 (5.12 \textcolor{green}{\textbf{$\downarrow$}})   & 32.28 (3.07 \textcolor{red}{\textbf{$\uparrow$}})     \\
AT  \cite{at}                   & 79.00 (0.78 \textcolor{red}{\textbf{$\downarrow$}})                                             & 15.57 (0.38 \textcolor{green}{\textbf{$\downarrow$}})    & 31.25 (0.22 \textcolor{green}{\textbf{$\downarrow$}})            \\
NST \cite{nst}                   & 79.70 (0.10 \textcolor{green}{\textbf{$\uparrow$}})                                              & 15.11 (3.33 \textcolor{green}{\textbf{$\downarrow$}})   &  30.87 (1.44 \textcolor{green}{\textbf{$\downarrow$}})     \\ 
\hline
SS  \cite{oversampling}                   & 82.69 (3.86 \textcolor{green}{\textbf{$\uparrow$}})                                             & 3.29 (78.95 \textcolor{green}{\textbf{$\downarrow$}})   & 7.13 (77.23 \textcolor{green}{\textbf{$\downarrow$}})     \\
AD  \cite{adv_debiasing}                   & 62.49 (21.51 \textcolor{red}{\textbf{$\downarrow$}})                                             & 11.59 (25.85 \textcolor{green}{\textbf{$\downarrow$}})   & 23.07 (26.34 \textcolor{green}{\textbf{$\downarrow$}})     \\ \hline
SS+HKD                 & 82.27 (3.33 \textcolor{green}{\textbf{$\uparrow$}})                                             & 10.15 (35.06 \textcolor{green}{\textbf{$\downarrow$}})   & 20.37 (34.96 \textcolor{green}{\textbf{$\downarrow$}})     \\
SS+FitNet              &  81.73 (2.65 \textcolor{green}{\textbf{$\uparrow$}})                                            & 10.35 (33.78 \textcolor{green}{\textbf{$\downarrow$}})    & 20.92 (33.21 \textcolor{green}{\textbf{$\downarrow$}})     \\
AD+HKD                 &  79.27 (0.44 \textcolor{red}{\textbf{$\downarrow$}})                                            & 16.19 (3.58 \textcolor{red}{\textbf{$\uparrow$}})   & 33.25 (6.16 \textcolor{red}{\textbf{$\uparrow$}})     \\
AD+FitNet              &  79.59 (0.04 \textcolor{red}{\textbf{$\downarrow$}})                                            & 15.90 (1.73 \textcolor{red}{\textbf{$\uparrow$}})   & 32.47 (3.67 \textcolor{red}{\textbf{$\uparrow$}})     \\
\hline
MFD                   & \textbf{82.77 (3.96 \textcolor{green}{\textbf{$\uparrow$}})}                                            & \textbf{2.73 (82.53 \textcolor{green}{\textbf{$\downarrow$}})} & \textbf{6.08 (80.59 \textcolor{green}{\textbf{$\downarrow$}})}  \\
\hline
\end{tabular}}
}
%\vspace{.1in}
%\caption{The comparison of algorithms on the CIFAR-10S dataset}
\label{tab1:cifar10s}
\vspace{-0.2in}
\end{table}

\noindent\textbf{Performance comparison}\ \ 
% We now compare MFD with a teacher model and three classes of baselines, as described in Table \ref{tab1:cifar10s}.
% ; a teacher model, softmax and feature distillation, algorithmic fairness methods and the naive unions of fairness and distillation. 
% We firstly train the teacher model and then retrain the model of same architecture by utilizing the knowledge of the teacher model except the third category. 
Table \ref{tab1:cifar10s} shows the accuracy, $\operatorname{DEO_{A}}$, and $\operatorname{DEO_{M}}$ (all in \%) of the teacher (which is simply trained on CIFAR-10S), the students trained with the schemes from the three classes of baselines, and the student trained with our MFD on CIFAR-10S. We can make the following observations from the table.
Firstly, MFD dominates all baselines, significantly improving both the accuracy and the fairness over the teacher.
Secondly, we find that the knowledge distillation from the unfair teacher can exacerbate the discrimination (\textit{e.g.,$\operatorname{DEO_{M}}$}) of the student while the accuracy is improved, \textit{e.g.}, HKD and FitNet. 
Thirdly, for in-processing method baselines, we observe both AD and SS successfully improve the fairness, as expected, but AD worsens the accuracy. 
% This is considered AD utilize per-class adversary for $\operatorname{DEO}$ and they suppress learning the task. 
In case of SS, while it also improves the accuracy of the student, we observe 
% significantly reduces the discrimination, 
our MFD outperforms it in terms of both accuracy and fairness. This reveals that our MFD effectively exploits the teacher by employing our regularizer $\hat{\mathcal{L}}_{MFD}$.
% This reveals that our MFD has the effective additional elements beyond SS.
Finally, we observe that a simple combination of in-processing methods with KD methods may either impair the accuracy or limit the fairness improvements.

\begin{figure}[t]
% \vspace*{-10mm}
%\vspace{-0.1in}
\begin{center}
  \includegraphics[width=0.85\linewidth]{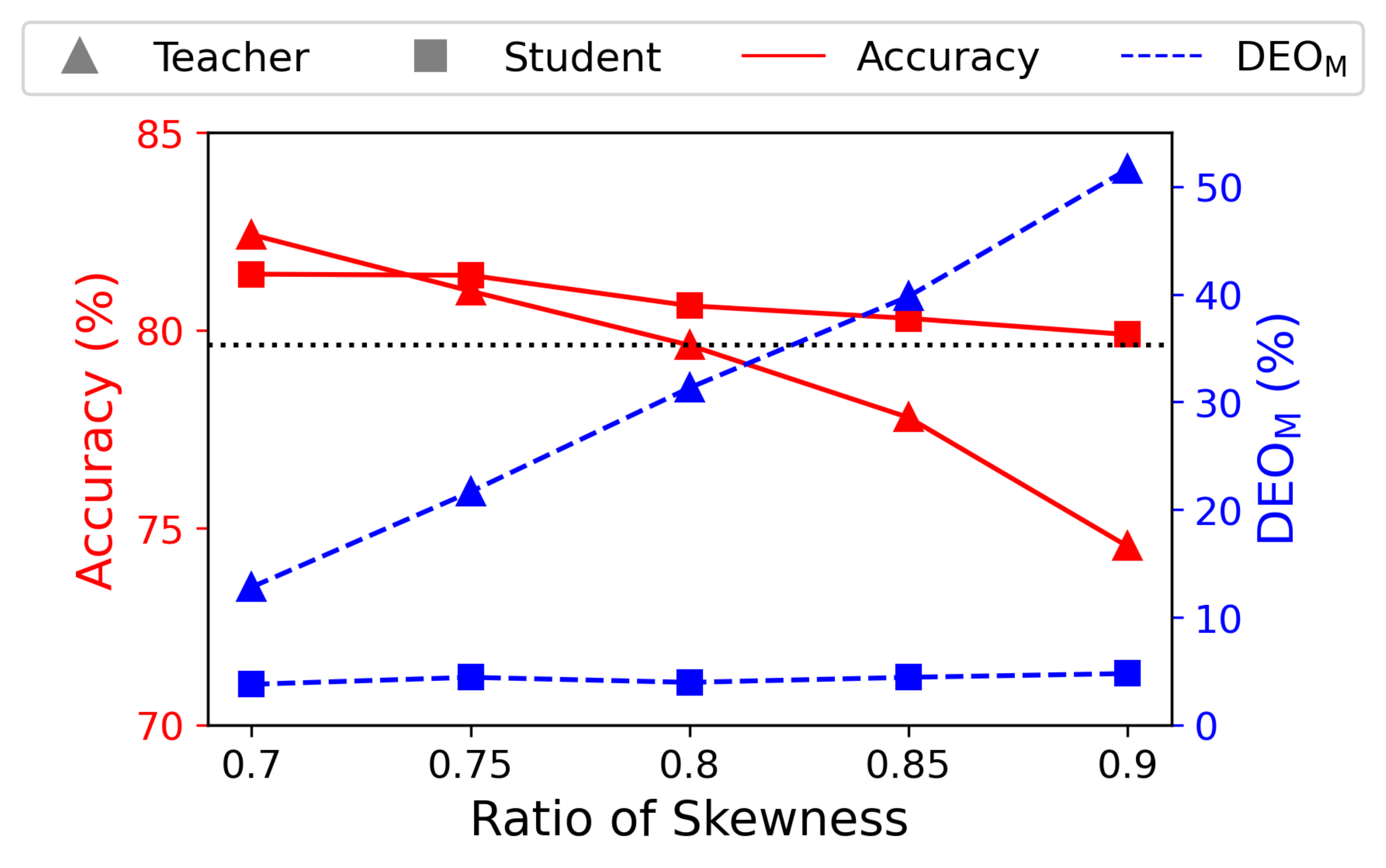}
\end{center}
    \vspace{-0.1in}
  \caption{The level of student unfairness as the unfairness of teacher is intensified. Lower $\operatorname{DEO_{M}}$ indicates the model is fairer. Black dotted line indicates accuracy of the model trained from scratch with the skew rate of 0.8 }
\label{fig:skewness}
\vspace{-0.2in}
\end{figure}

\noindent\textbf{Distillation from unfairer teacher}\ \ 
As mentioned in above dataset subsection, the teacher in Table \ref{tab1:cifar10s} was trained with the skew rate of 0.8. We now test the effect of the different level of unfairness of the teacher in the performance of the student trained with MFD. 
% We showed that MFD improves the fairness effectively by taking advantage of the teacher. 
% However, the unfairness of teacher will actually be in diverse level, so we study how the resulting model of MFD changed when the extent of unfairness at teacher is either amplified or declined.
%However, since a teacher can be extremly unfair, MFD should tackle the unfairness of the teacher, so we study how the unfairness of the student is changed when the extent of unfairness at teacher is either amplified or declined.
% a crucial point in MFD is to understand how it can tackle the unfairness of the teacher. Namely, 

Figure \ref{fig:skewness} 
shows the accuracy and $\operatorname{DEO_{M}}$ of the teachers that are trained with different skew rates (shown in the horizontal axis) of CIFAR-10S train set as well as those of the corresponding student that are trained with MFD employing each teacher. To see only the effect of differently biased teachers, we always fixed the skew rate of the train set for the student to 0.8. 
From the figure, we clearly see that as the skew rate increases, the teacher becomes increasingly unfair and inaccurate. In contrast, we observe that the student always achieves the higher accuracy than that of the model trained from scratch (\textit{i.e.}, the teacher at the skew rate 0.8), even when the teacher MFD employs is heavily biased. Moreover, we observe the fairness of the student is significantly improved compared to the model trained from scratch and stays almost the same regardless of the unfairness level of the teacher. We believe this result corroborates our intuition that even when the original teacher is heavily biased, MFD can successfully distill the group-indistinguishable features from the teacher so that both the accuracy and fairness can be improved in the student.

\begin{figure}
    %\vspace{-0.1in}
    \centering
    \begin{tabular}{c|c}
    \subfigure[]{\includegraphics[width=0.21\textwidth]{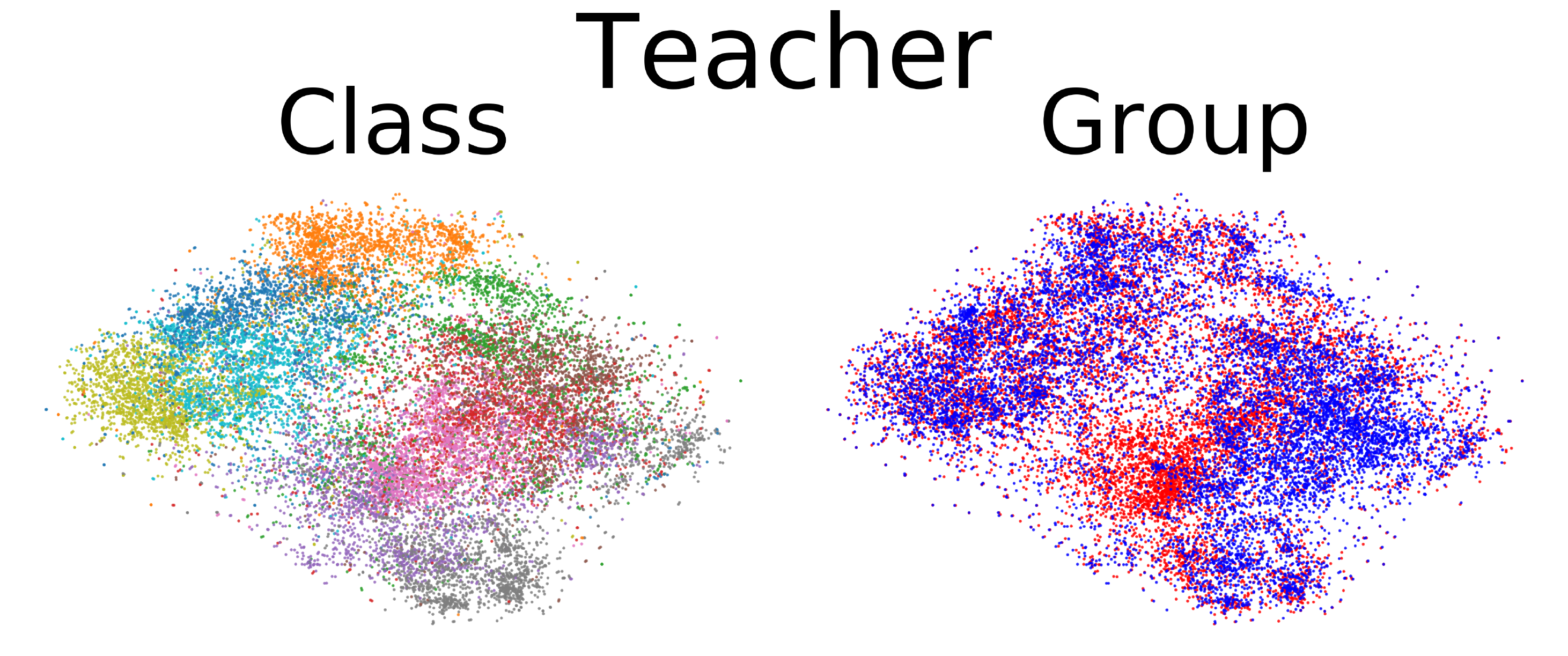}}
    &
    \subfigure[]{\includegraphics[width=0.21\textwidth]{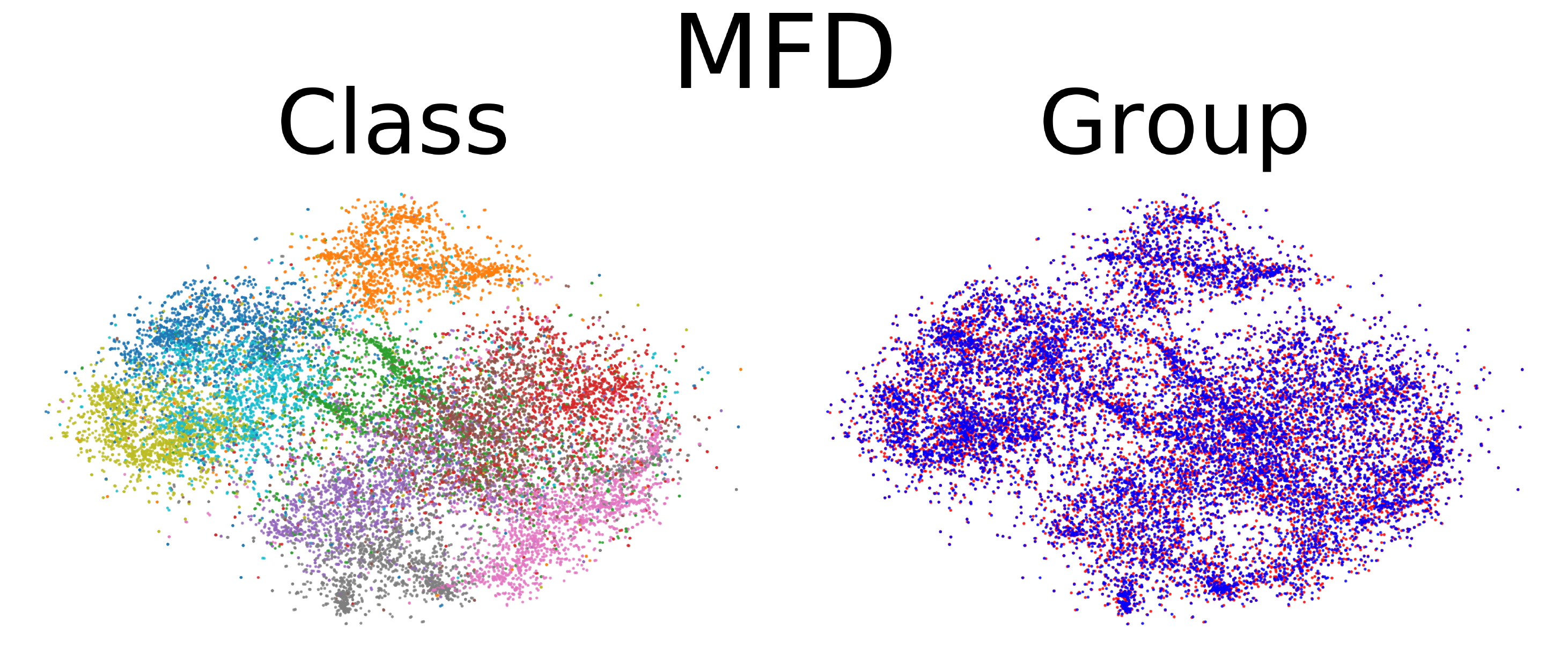}}
    \end{tabular}
    % \caption{Feature visualization using t-SNE \cite{van2008visualizing} on the doubled CIFAR-10 test data.}
    \caption{t-SNE \cite{van2008visualizing} plots of features from CIFAR-10S test set.}
    \label{fig:TSNE}
    \vspace{-0.05in}
\end{figure}

\begin{table}
%\caption{The comparison of algorithms on the UTKFace dataset. The red arrows indicate that performance has been worse compared to the one of Teacher in corresponding measurement, and the green one indicates the measurement has been improved. The numbers in parentheses represent how much it is changed from the value of Teacher. \textit{i.e.}, relative change expressed as a percentage ($\%$).}
\caption{The comparison of algorithms on UTKFace dataset. Other settings are identical to Table \ref{tab1:cifar10s}.}
\centering
\resizebox{230pt}{!}
{%
\begin{tabular}{c|c|c|c} 
\hline
\multirow{2}{*}{Model} & \multirow{2}{*}{\begin{tabular}[c]{@{}c@{}}Accuracy (\textbf{$\uparrow$})\\\end{tabular}} & \multirow{2}{*}{\begin{tabular}[c]{@{}c@{}}$\operatorname{DEO_{A}}$ (\textbf{$\downarrow$})\\\end{tabular}} & \multirow{2}{*}{\begin{tabular}[c]{@{}c@{}}$\operatorname{DEO_{M}}$ (\textbf{$\downarrow$})\\\end{tabular}}   \\ 
                       &                                                                                                   &                         &                           \\   
\hline\hline
Teacher                & 74.54                                                                                             & 21.92                    & 39.25                         \\ 
\hline
HKD      \cite{hintonKD}              & \textbf{76.17 (2.19 \textcolor{green}{\textbf{$\uparrow$}}) }                                   & 22.50 (2.65 \textcolor{red}{\textbf{$\uparrow$}}) & 41.25 (5.10 \textcolor{red}{\textbf{$\uparrow$}})    \\
FitNet    \cite{fitnet}             & 75.23 (0.93 \textcolor{green}{\textbf{$\uparrow$}})                                             & 21.50 (1.92 \textcolor{green}{\textbf{$\downarrow$}})   & 40.00 (1.91 \textcolor{red}{\textbf{$\uparrow$}})     \\
AT     \cite{at}                & 75.17 (0.85 \textcolor{green}{\textbf{$\uparrow$}})                                             & 22.67 (3.42 \textcolor{red}{\textbf{$\uparrow$}})    & 40.50 (3.18 \textcolor{red}{\textbf{$\uparrow$}})            \\
NST     \cite{nst}               & 75.10 (0.75 \textcolor{green}{\textbf{$\uparrow$}})                                              & 22.75 (3.79 \textcolor{red}{\textbf{$\uparrow$}})   &  42.00 (7.01 \textcolor{red}{\textbf{$\uparrow$}})     \\ 
\hline
SS    \cite{oversampling}                 & 75.23 (0.93 \textcolor{green}{\textbf{$\uparrow$}})                                             & 24.33 (10.99 \textcolor{red}{\textbf{$\uparrow$}})   & 38.50 (1.91 \textcolor{green}{\textbf{$\downarrow$}})     \\
AD \cite{adv_debiasing}                    & 74.67 (0.17 \textcolor{green}{\textbf{$\uparrow$}})                                             & 20.42 (6.84 \textcolor{green}{\textbf{$\downarrow$}})   & 36.00 (8.28 \textcolor{green}{\textbf{$\downarrow$}})     \\ \hline
SS+HKD                 & 76.08 (2.07 \textcolor{green}{\textbf{$\uparrow$}})                                             & 21.92 (0.00 {--})   & 37.50 (4.46 \textcolor{green}{\textbf{$\downarrow$}})     \\
SS+FitNet              & 75.50 (1.29 \textcolor{green}{\textbf{$\uparrow$}})                                            & 21.92 (0.00 {--})    & 38.00 (3.18 \textcolor{green}{\textbf{$\downarrow$}})     \\
AD+HKD                 &  69.48 (6.79 \textcolor{red}{\textbf{$\downarrow$}})                                            & 18.75 (14.46 \textcolor{green}{\textbf{$\downarrow$}})   & 32.50 (17.20 \textcolor{green}{\textbf{$\downarrow$}})     \\
AD+FitNet              &  70.23 (5.78 \textcolor{red}{\textbf{$\downarrow$}})                                            & 21.17 (3.42 \textcolor{green}{\textbf{$\downarrow$}})   & 33.75 (14.01 \textcolor{green}{\textbf{$\downarrow$}})     \\
\hline
MFD                   & 74.69 (0.20 \textcolor{green}{\textbf{$\uparrow$}})                                            & \textbf{ 17.75 (19.02 \textcolor{green}{\textbf{$\downarrow$}})} & \textbf{28.50 (27.39 \textcolor{green}{\textbf{$\downarrow$}})}  \\
\hline
\end{tabular}
}
\vspace{-.05in}
%\caption{The comparison of algorithms on the UTKFace dataset}
\label{tab2:utkface}

\end{table}

\begin{table}[t!]
\centering
\caption{The comparison of algorithms on CelebA dataset. Other settings are identical to Table \ref{tab1:cifar10s}.}
%\vspace{.05in}
\resizebox{230pt}{!}
{%
\begin{tabular}{c|c|c|c} 
\hline
\multirow{2}{*}{Model} & \multirow{2}{*}{\begin{tabular}[c]{@{}c@{}}Accuracy (\textbf{$\uparrow$})\\\end{tabular}} & \multirow{2}{*}{\begin{tabular}[c]{@{}c@{}}$\operatorname{DEO_{A}}$ (\textbf{$\downarrow$})\\\end{tabular}} & \multirow{2}{*}{\begin{tabular}[c]{@{}c@{}}$\operatorname{DEO_{M}}$ (\textbf{$\downarrow$})\\\end{tabular}}   \\ 
                       &                                                                                          &                         &                           \\   
\hline\hline
Teacher                & 78.33                                                                                    & 21.04                    & 21.81                         \\ 
\hline
HKD    \cite{hintonKD} & 78.64 (0.40 \textcolor{green}{\textbf{$\uparrow$}})                                    & 21.56 (2.47 \textcolor{red}{\textbf{$\uparrow$}}) & 22.54 (3.35 \textcolor{red}{\textbf{$\uparrow$}})    \\
FitNet  \cite{fitnet}               & 78.62 (0.37 \textcolor{green}{\textbf{$\uparrow$}})                                    & 20.66 (1.81 \textcolor{green}{\textbf{$\downarrow$}})   & 21.70 (0.50 \textcolor{green}{\textbf{$\downarrow$}})     \\
AT   \cite{at}                  & 78.63 (0.38 \textcolor{green}{\textbf{$\uparrow$}})                                 & 21.28 (1.14 \textcolor{red}{\textbf{$\uparrow$}})    & 22.24 (1.97 \textcolor{red}{\textbf{$\uparrow$}})            \\
\hline
SS  \cite{oversampling}                   & 79.67 (1.71 \textcolor{green}{\textbf{$\uparrow$}})                                    & 4.87 (76.85 \textcolor{green}{\textbf{$\downarrow$}})   & 5.22 (76.07 \textcolor{green}{\textbf{$\downarrow$}})     \\
AD \cite{adv_debiasing}                    & 76.10 (2.85 \textcolor{red}{\textbf{$\downarrow$}})                                   & \textbf{2.51 (88.07 \textcolor{green}{\textbf{$\downarrow$}})}   & \textbf{3.34 (84.69 \textcolor{green}{\textbf{$\downarrow$}})}     \\ \hline
SS+HKD                 & 79.95 (2.07 \textcolor{green}{\textbf{$\uparrow$}})                                    & 8.41 (60.03 \textcolor{green}{\textbf{$\downarrow$}})   & 8.27 (62.08 \textcolor{green}{\textbf{$\downarrow$}})     \\
SS+FitNet              &  79.77 (1.84 \textcolor{green}{\textbf{$\uparrow$}})                                   & 9.31 (55.75 \textcolor{green}{\textbf{$\downarrow$}})    & 8.61 (60.52 \textcolor{green}{\textbf{$\downarrow$}})     \\
AD+HKD                 &  80.31 (2.53 \textcolor{green}{\textbf{$\uparrow$}})                                   & 3.40 (83.84 \textcolor{green}{\textbf{$\downarrow$}})   & 4.05 (81.43 \textcolor{green}{\textbf{$\downarrow$}})     \\
AD+FitNet              &  \textbf{80.60 (2.90 \textcolor{green}{\textbf{$\uparrow$}})}                                  & 5.12 (75.67 \textcolor{green}{\textbf{$\downarrow$}})   & 5.51 (74.74 \textcolor{green}{\textbf{$\downarrow$}})     \\
\hline
MFD                   & 80.15 (2.32 \textcolor{green}{\textbf{$\uparrow$}})                                   & 5.46 (74.05 \textcolor{green}{\textbf{$\downarrow$}}) & 5.86 (73.13 \textcolor{green}{\textbf{$\downarrow$}})  \\
\hline
\end{tabular}
}
\label{tab3:celeba}
%\vspace{.1in}
%\caption{The comparison of algorithms on the Celeba dataset}
\vspace{-0.2in}
\end{table}
%%%
\begin{table*}[t!]

\centering
\parbox{15.05cm}{
\caption{Ablation study for MFD on all dataset. All tunable hyperparameter search proceeds the same way as Table \ref{tab1:cifar10s}. We reported MFD-K with the highest accuracy and, MFD-F and MFD with the best $\operatorname{DEO_{M}}$.}
\label{tab4:ablation}
%\vspace{.05in}
}
\resizebox{430pt}{!}{
\begin{tabular}{c||c|c|c||c|c|c||c|c|c}
\hline
        & \multicolumn{3}{c||}{CIFAR-10S}     & \multicolumn{3}{c||}{UTKFace}       & \multicolumn{3}{c}{CelebA}        \\ \cline{2-10} 
        & Accuracy & $\operatorname{DEO_{A}}$ & $\operatorname{DEO_{M}}$ & Accuracy & $\operatorname{DEO_{A}}$ & $\operatorname{DEO_{M}}$ & Accuracy &$\operatorname{DEO_{A}}$ & $\operatorname{DEO_{M}}$ \\ \hline\hline
% Teacher & 79.62  & 15.63      & 31.32      & 74.54    & 21.92      & 39.25      & 78.50     & 20.56      & 21.13      \\ \hline
Teacher & 79.62  & 15.63      & 31.32      & 74.54    & 21.92      & 39.25      & 78.33     & 21.04      & 21.81      \\ \hline
MFD-K  & 80.13    & 14.70       & 29.83      & \textbf{75.42}    & 21.67      & 38.5       & 78.43    & 21.19       & 20.59       \\ \hline
MFD-F  & 82.45    & 2.98      & 6.18      & 72.42    & 19.50      & 35.00       & 79.84    & \textbf{2.58}       & \textbf{2.98}       \\ \hline
MFD    & \textbf{82.77}    & \textbf{2.73}        & \textbf{6.08}      & 74.69     & \textbf{17.75}       & \textbf{28.50}      & \textbf{80.15}    & 5.46      & 5.86      \\
\hline
\end{tabular}
}
\vspace{-0.1in}
\end{table*}

\noindent\textbf{Feature visualization}\ \ To qualitatively investigate how MFD successfully reduces the discrimination,
we visualize t-SNE embeddings of the teacher and the student trained with MFD in Figure \ref{fig:TSNE} (a) and (b).
%we visualize the t-SNEs of feature distributions of Teacher and MFD models for the CIFAR-10S dataset in Figure \ref{fig:TSNE}. 
% , we utilized t-SNE to qualitatively visualize the feature distributions of Teacher and MFD at the penultimate layer on CIFAR-10S dataset. 
% (a), (b) 모두에서 왼쪽, 오른쪽 그림은 같은 color dots에 대해 각각 image가 속한 class, group에 따라 색을 칠했다.
In the figure, each point represents the feature vector of an image at the penultimate layer of the model used in Table \ref{tab1:cifar10s}.
The points at the left and the right of (a) and (b) are colored according to its class (left) and group (right), respectively.
%In the figure, each point represents the feature of a data point from each model and is colored according to its class and group separately at the left and the right of (a) and (b).
% The two plots on the left (also two on the right) are for the same points, but different in terms of coloring.
% The first and third are colored according to class to which an image belongs, and the rest according to group.
%Both for (a) and (b) plots, left and right plots are colored for the same points, respectively, depending on the class and group to which each image belongs.
% In both (a) and (b) of the figure, the left and right plots are colored for the same points according to the classes and groups, respectively, to which each image belongs.
Note MFD significantly reduces the distributional bias between the features for the grayscale (red) and color (blue) groups, while maintaining separability for the ten target classes.
% 이것은 시각적으로 MFD가 accuracy를 고려하면서 sensitive group간 discrepancy를 크게 줄여줄 수 있음을 보여준다.
% 여러 조합에 대해서 일관된 결과가 도출되었다 ? 굳이 안써도 될듯
This visualization again shows that MFD can considerably mitigate the discrepancies between different groups,
% This visually demonstrates that MFD can considerably mitigate discrepancy among sensitive groups 
while maintaining information related to the classification task.
Hyperparameters of t-SNE to reproduce the results in 
Figure \ref{fig:TSNE} 
are provided in the Supplementary Material.
% normalization + TSNE(n_components=3, perplexity=200.0, early_exaggeration=1.0,
% n_iter=250,
%  metric='cosine',
%          random_state=5,  나머지는 default

\subsection{Real-world Datasets}
We now consider two real-world scenarios; age classification and attribute recognition. 
For each scenario, we used UTKFace \cite{utkface} and CelebA \cite{celeba}; the former was used as a benchmark with multi-classes and multi-groups and the latter was used to test on a larger scale data. 

\noindent\textbf{Dataset}\ \ UTKFace is a face dataset containing more than 20,000 face images of different ethnicity over the age from 0 to 116. The ethnicity is originally composed of 5 different groups of \textit{White}, \textit{Black}, \textit{Asian}, \textit{Indian}, and \textit{Others} including \textit{Hispanic}, \textit{Latino}, etc. We excluded \textit{Others} from the dataset and used the remaining four race groups as sensitive attributes. We then divided the age range into three classes: ages between 0 to 19, 20 to 40, and ages more than 40. 
% As a result, we used 20,813 training images.
% We balanced the test data by randomly taking 100 images from each race group for each age class, and set the others as training data, resulting in 20,813 and 1,200 images for training and test images respectively. 
% We resized all the images to 176$\times$176 from 200$\times$200.
CelebA consists of more than 200,000 face images annotated with 40 binary attributes. Since the dataset has severe attribute imbalance, using multiple attributes would significantly reduce the test samples, hence, undermine the statistical significance of the results. Therefore, we only consider the binary group and binary class in our experiment; namely, we set \textit{Gender} as the sensitive attribute and \textit{Attractive} as the target variable, as in the work of Quadrianto \textit{et al.} \cite{datadomain}.
For unbiased evaluation of the accuracy and fairness, the test set was balanced by randomly taking the same number of images for each group and each class on both UTKFace and CelebA.

% hence, using multiple attributes will significantly reduce the test samples to be balanced and undermine the statistical significance of the results. Therefore, we consider only binary attribute and binary class; we set \textit{Gender} as the sensitive attribute and \textit{Attractive} as the target variable, like in the work of Quadrianto \textit{et al.} \cite{datadomain}. 
% For unbiased evaluation of the accuracy and fairness, the test data was balanced by randomly taking the same number of images for each group and each class on both UTKFace and CelebA.
% we balanced the test data by randomly taking the same number of images from each race group for each age class on UTKFace and CelebA. 

% In this dataset, the ratio of attractive females to unattractive ones is twice more than the ratio for males. Namely, the males are the protected group and the point is how the model can address the discriminative treatment on the male group.

\noindent\textbf{Performance comparison}\ \ 
Table \ref{tab2:utkface} and \ref{tab3:celeba} evaluate the performance of various baselines as well as our MFD on the two real-world datasets. 
% For two real-world scenarios, we evaluate the various baselines in terms of accuracy and fairness, as summarized in Table \ref{tab2:utkface} and \ref{tab3:celeba}.
% Here, we use the same experiment setting as in Section \ref{sec_5_1}, such as the baseline or the criterion for selecting the optimal parameter. 
We omit the result for NST on CelebA due to computational limitations.
% In Table \ref{tab2:utkface}, our observations are similar to Table \ref{tab1:cifar10s}, except that the teacher has severely poor performance on CelebA. 
We again confirm MFD considerably improves the fairness metrics, $\operatorname{DEO_{A}}$ and $\operatorname{DEO_{M}}$, as well as the accuracy. 
% achieving the 52.87\% and 91.72\% 
% Although there is a slight accuracy drop in MFD compared to the teacher in Table \ref{tab2:utkface}, it is notable that MFD significantly improves the fairness.
% with the small loss of accuracy compared to the teacher. 
For both datasets, we again observe the KD baselines improve the accuracy, as expected, but generally hurt the fairness of the teacher. The in-processing method baselines, SS and AD, and their KD-combined versions perform quite well on CelebA for both accuracy and fairness; however, we observe they show no or only little improvement in fairness on UTKFace, which is a multi-class, multi-group dataset. In contrast, we observe MFD \textit{robustly} improves both the fairness and accuracy of the teacher regardless of the datasets.

\subsection{Ablation Study}
To further study the effectiveness of our regularization term, $\hat{\mathcal{L}}_{MFD}$, and verify our theoretical analyses, we evaluate the performance of the two variants of MFD, MFD-K and MFD-F, that only consider KD and fairness aspect, respectively. These variants utilize MMD-based regularization terms derived from our lemmas. Namely, MFD-K adopts RHS in Lemma \ref{lemma_1} as the regularization term to distill the knowledge from the teacher by minimizing the MMD loss between the feature distributions of teacher $\mathbf{p}^{T}$ and student $\mathbf{p}^{S}$ with no consideration of fairness.
% which captures the effectiveness of our method as distillation. 
% On the other hand,
On the other hand,
MFD-F trains a model \textit{without} the teacher, using RHS in Lemma \ref{lemma_2} as the regularization term, hence, no distillation. 
% Thus, MFD-F only considers making the model fair by minimizing the class-wise, pairwise distances between all group pairs of the group-conditioned feature distributions, $(\mathbf{p}^{S}_{a,y},\mathbf{p}^{S}_{a',y})$. 
In our implementation of MFD-F, for the stable and efficient training, we substituted the class-wise, pairwise distance $\operatorname{D}^{2}(\mathbf{p}^{S}_{a,y},\mathbf{p}^{S}_{a',y})$ with the distance $\operatorname{D}^{2}(\textbf{p}^{S}_{y},\textbf{p}^{S}_{a,y})$, and only used gradients obtained from $\textbf{p}^{S}_{a,y}$. 
% Through the evaluation of MFD-F, we figure out that Lemma \ref{lemma_2} indeed gives to $\mathcal{L}_{MFD}$ theoretical justification as a fairness regularization term. 

Table \ref{tab4:ablation} reports the accuracy and $\operatorname{DEO}$ metrics evaluated on all our datasets, for teacher, MFD-K, MFD-F and MFD. 
% For the stable and efficient training of MFD-F, we substituted the pairwise distance of $\textbf{p}^{S}_{a,y}$ into the distance between $\textbf{p}^{S}_{y}$ and $\textbf{p}^{S}_{a,y}$, and only used gradients obtained from $\textbf{p}^{S}_{a,y}$. 
We also used the same mini-batch technique
% implementation 
for MFD-F as MFD. Followings are our observations. 
Firstly, 
% we observe that 
we observe MFD-K indeed improves the accuracy of the teacher, hence, it can be used as a yet another KD scheme. 
% by utilizing the teacher's knowledge and RHS of Eq. (\ref{eq:lemma_1}) can act as a regularizer of distillation. 
Secondly, we note that
% , through the training using RHS of Eq. (\ref{eq:lemma_2}), 
MFD-F creates fairer models than teacher, but it may lead to a slight loss of accuracy as in UTKFace. This implies that utilizing the teacher has a critical role in maintaining or improving the accuracy while training a fairer model. 
% the pre-trained teacher has a critical role of keep its accuracy when creating a fair model.
Finally, we clearly see that MFD is the only method that consistently makes fairer models than the teachers while improving accuracy over all datasets. Thus, we conclude that $\hat{\mathcal{L}}_{MFD}$ is very effective in building a fair model via knowledge distillation, as verified in our lemmas.

\section{Conclusion}
We proposed a novel in-processing method, MFD, that can both improve the accuracy and fairness of an already deployed, unfair model via feature distillation. Namely, our novel MMD-based regularizer utilizes the group-indistinguishable predictive features from the teacher while promoting the student model to not discriminate against any protected groups. 
% induces the student model to become more fair 
% mitigates unfair bias via feature distillation using MMD. Our MFD takes into account that a student should not discriminate against any protected groups and learns the knowledge of a teacher simultaneously. 
%We demonstrated that our MFD creates the significantly fair student on various datasets. 
Throughout the theoretical justification and extensive experimental analyses, we showed that our MFD is very effective and robust across diverse datasets. 
% Our method can be naturally extended to compress models while ensuring fairness by distilling a larger teacher, which we will study as a future work.

\section*{Acknowledgment}

This work was supported in part by NRF Mid-Career Research Program [NRF-2021R1A2C2007884] and IITP grant [No.2019-
0-01396, Development of framework for analyzing, detecting, mitigating of bias in AI model and
training data], funded by the Korean government.

%-------------------------------------------------------------------------
{\small
\bibliographystyle{ieee_fullname}
\bibliography{egbib}
}
\end{document}

% --- supplement: supplementary.tex ---

\title{Supplementary Materials for Fair Feature Distillation for Visual Recognition}

%%%%%%%%% TITLE

% \author{First Author\\
% Institution1\\
% Institution1 address\\
% {\tt\small firstauthor@i1.org}
% % For a paper whose authors are all at the same institution,
% % omit the following linfvfdgfdes up until the closing ``}''.
% % Additional authors and addresses can be added with ``\and'',
% % just like the second author.
% % To save space, use either the email address or home page, not both
% \and
% Second Author\\
% Institution2\\
% First line of institution2 address\\
% {\tt\small secondauthor@i2.org}
% }

\author{Sangwon Jung\textsuperscript{\rm 1}\thanks{Equal contribution.}, Donggyu Lee\textsuperscript{\rm 1}\footnotemark[1], Taeeon Park\textsuperscript{\rm 1}\footnotemark[1] ~and Taesup Moon\textsuperscript{\rm 2}\thanks{Corresponding author (E-mail: \texttt{tsmoon@snu.ac.kr})} \\\
% \textsuperscript{\rm 1}ECE, Sungkyunkwan University, Suwon, Korea\\
\textsuperscript{\rm 1}Department of Electrical and Computer Engineering, Sungkyunkwan University, Suwon, Korea\\
\textsuperscript{\rm 2}Department of Electrical and Computer Engineering, Seoul National University, Seoul, Korea\\
{\tt\small \{s.jung, ldk308, pte1236\}@skku.edu,
\tt\small tsmoon@snu.ac.kr}}

\maketitle

%%%%%%%%% BODY TEXT
\section{Implementation Details}
For all datasets, we trained all methods for 50 epochs with a mini-batch size of 128 using Adam optimizer with an initial learning rate 0.001 and decaying it by a factor of 10 if no improvement in the test loss for 10 consecutive epochs. Also, all results were averaged over 4 different random runs. 

\subsection{Network Architecture for CIFAR-10S}
We employed a simple convolutional neural network having six convolutional layers with the kernel size of 3$\times$3, followed by two fully connected hidden layers with ReLU \cite{nair2010rectified} activations. The number of channels was set to 32, 32, 64, 64, 128, and 128 for each convolutional layer, respectively. Dropout \cite{srivastava2014dropout} and max-pooling were applied after every two convolutional layers. 

\subsection{Hyperparameters for Main Results}
For fair comparison, we did the extensive search for one hyperparameter of each method including ours and baselines. We set one parameter to search for and fixed others using a suitable strategy for baselines having more than two hyperparameters. 
For HKD and FitNet, we focused on finding the optimal $T$, a temperature to soften the output, while we gradually annealed the strength of output distillation for both methods and fixed feature distillation strength for FitNet to 1 like in \cite{fitnet}. For AD, we tune the strength of the adversary loss while fixing the learning rate of it to 0.003, a commonly used value. For the variants of SS, we do the same search strategy as the knowledge distillation methods. For the variants of AD, we fixed the all hyperparameters of the knowledge distillation to the best values found in the experiment for single distillations and searched the strength of the adversary loss to control the balance between two methods. 
The values for hyperparameters used to report the results in the manuscript are in Table \ref{tab:hyperparam}. 
In Table \ref{tab:hyperparam}, we denote the strength for each method as $\lambda$.

\subsection{Details on AD+FitNet}
%To show that our method is better than baselines that combine fairness methods and knowledge distillation methods naively, we picked all fairness methods, SS and AD, and two representative knowledge distillation methods, HKD and FitNet. 
Three combined methods of the third class of baselines in the manuscript, except for AD+FitNet, are naturally implemented, but implementing AD+FitNet requires modification to FitNet.
%While the three combined methods except for AD+FitNet among the third class of baselines are naturally implemented, a modification to FitNet should be required to implement AD+FitNet. 
More specifically, FitNet originally has two stages of training, the hint training for feature distillation and the KD training for output distillation. However, since this stage-wise training of FitNet has difficulty to being incorporated with the mini-max game with an adversary in AD, we modify the two stages training FitNet to one stage FitNet by minimizing the output and feature distillation loss simultaneously, as in \cite{crd}. Then, we integrate the loss of an adversary of AD into the loss of one stage FitNet to implement AD+FitNet.

\subsection{Hyperparmeters for t-SNE}
Hyperparameters of t-SNE feature visualization for (Figure 5, manuscript) are as follows : 
dimension of the embedded space (3), perplexity (200), early exaggeration(1.0), maximum number of iterations (250), metric (cosine), random state (5).
For other factors, we remained default in scikit-learn \cite{scikit-learn}.
\begin{table}[h!]
\small
\caption{Hyperparameters for experiments.}
\centering
\vspace{0.03in}
\label{tab:hyperparam}
\resizebox{240pt}{!}{\begin{tabular}{c|c|c|c}
\hline
Methods\textbackslash{}Dataset & \multicolumn{1}{c|}{CIFAR-10S} & \multicolumn{1}{c|}{UTKFace} & \multicolumn{1}{c}{CelebA}               \\ \hline \hline
% Teacher                         & $\lambda$ (), $\mu$(), $\rho$()           & $\lambda$ (), $\mu$(), $\rho$()             & $\lambda$ (), $\mu$(), $\rho$() \\ \hline
HKD                            & $T$ (1)                  & $T$ (3)                    & $T$ (5)                    \\
FitNet                            & $T$ (1)                        & $T$ (5)                           & $T$ (1)                               \\
AT                           & $\lambda$ (1)                     & $\lambda$ (30)                        & $\lambda$ (1)                         \\
NST                           & $\lambda$ (30)                     & $\lambda$ (3)                        & -                         \\ \hline
% SS                           & $\lambda$ (4)                     & $\lambda$ (1)                        & $\lambda$ (7)                          \\
AD                             & $\lambda$ (0.001)             & $\lambda$ (0.01)                & $\lambda$ (10)                        \\ \hline
SS+HKD                 & $T$ (3)                                     & $T$ (5)   & $T$ (3)       \\
SS+FitNet              & $T$ (3)                                      & $T$ (10)     & $T$ (3)       \\
AD+HKD                 & $T$ (1) $\lambda$ (1e-4)                                     & $T$ (3) $\lambda$ (30)    & $T$ (5) $\lambda$ (10)      \\
AD+FitNet              & $T$ (1) $\lambda$ (1e-3)                                     & $T$ (5) $\lambda$ (1)    & $T$ (1), $\lambda$ (1)      \\ \hline
MFD                   & $\lambda$ (3)                                    & $\lambda$ (3)  & $\lambda$ (7)   \\ \hline 
\end{tabular}}
% \vspace{-0.2in}
\end{table}

\begin{table}[ht!]
\caption{Average accuracy (\%) and $\operatorname{DEO}$ (\%) with standard deviation on CIFAR-10S.} 
\centering
\vspace{0.03in}
\label{tab:cifar_supp}
\resizebox{240pt}{!}{\begin{tabular}{c|c|c|c}
\hline
          & Accuracy & $\operatorname{DEO}_A$ & $\operatorname{DEO}_M$ \\ \hline \hline
Teacher   & 79.62 ($\pm$0.14) & 15.63 ($\pm$0.44) & 31.32 ($\pm$1.47) \\ \hline
HKD       & 80.34 ($\pm$0.35) & 15.54 ($\pm$0.67) & 34.12 ($\pm$2.21) \\
FitNet    & 81.66 ($\pm$0.20) & 14.83 ($\pm$0.26) & 32.28 ($\pm$1.59) \\
AT        & 79.00 ($\pm$0.99) & 15.57 ($\pm$0.71) & 31.25 ($\pm$1.20) \\
NST       & 79.70 ($\pm$0.99) & 15.11 ($\pm$0.75) & 30.87 ($\pm$2.38) \\ \hline
SS        & 82.69 ($\pm$0.22) & 3.29 ($\pm$0.30) & 7.13 ($\pm$1.36)  \\
AD        & 62.49 ($\pm$30.32) & 11.59 ($\pm$6.75) & 23.07 ($\pm$13.36) \\ \hline
SS+HKD    & 82.27 ($\pm$0.33) & 10.15 ($\pm$0.20) & 20.37 ($\pm$1.14) \\
SS+FitNet & 81.73 ($\pm$0.39) & 10.35 ($\pm$0.47) & 20.92 ($\pm$0.54)  \\
AD+HKD    & 79.27 ($\pm$0.33) & 16.19 ($\pm$0.50) & 33.25 ($\pm$0.72) \\
AD+FitNet & 79.59 ($\pm$0.37) & 15.90 ($\pm$0.51) & 32.47 ($\pm$1.66) \\ \hline
MFD      & \textbf{82.77 ($\pm$0.14)}  & \textbf{2.73 ($\pm$0.41)} & \textbf{6.08 ($\pm$0.91)} \\ \hline
\end{tabular}}
\end{table}
\section{Result Tables}
\begin{table}[ht!]
\caption{Average accuracy (\%) and $\operatorname{DEO}$ (\%) with standard deviation on UTKFace.}
\centering
\vspace{0.03in}
\label{tab:utkface_supp}
\resizebox{240pt}{!}{\begin{tabular}{c|c|c|c} \hline
          & Accuracy & $\operatorname{DEO}_A$ & $\operatorname{DEO}_M$ \\ \hline \hline
Teacher   & 74.54 ($\pm$1.07) & 21.92 ($\pm$1.36) & 39.25 ($\pm$2.86)  \\ \hline
HKD       & \textbf{76.17 ($\pm$0.58)} & 22.5 ($\pm$0.76) & 41.25 ($\pm$3.49)  \\
FitNet    & 75.23 ($\pm$0.52) & 21.50 ($\pm$1.59) & 40.00 ($\pm$4.64)  \\
AT        & 75.17 ($\pm$0.82) & 22.67 ($\pm$3.41) & 40.50 ($\pm$6.87)  \\
NST       & 75.10 ($\pm$0.39) & 22.75 ($\pm$0.49) & 42.00 ($\pm$4.18)  \\ \hline
SS        & 75.23 ($\pm$0.87) & 24.33 ($\pm$1.75)  & 38.50 ($\pm$2.29)   \\
AD        & 74.67 ($\pm$1.01) & 20.42 ($\pm$1.55) & 36.00 ($\pm$2.55)  \\ \hline
SS+HKD    & 76.08 ($\pm$0.42) & 21.92 ($\pm$1.07) & 37.50 ($\pm$2.05)  \\
SS+FitNet & 75.5 ($\pm$0.99) & 21.92 ($\pm$1.75)  & 38.00 ($\pm$2.06)   \\
AD+HKD    & 69.48 ($\pm$3.21) & 18.75 ($\pm$1.93) & 32.50 ($\pm$4.15) \\
AD+FitNet & 70.23 ($\pm$6.64) & 21.17 ($\pm$6.03) & 33.75 ($\pm$6.06)  \\ \hline
MFD      & 74.69 ($\pm$0.69)  & \textbf{17.75 ($\pm$1.38)} & \textbf{28.50 ($\pm$1.80)} \\ \hline
\end{tabular}}
\end{table}

\begin{table}[ht!]
\caption{Average accuracy (\%) and $\operatorname{DEO}$ (\%) with standard deviation on CelebA.}
\centering
\vspace{0.03in}
\label{tab:celeba_supp}
\resizebox{240pt}{!}{\begin{tabular}{c|c|c|c} \hline
          & Accuracy & $\operatorname{DEO}_A$ & $\operatorname{DEO}_M$ \\ \hline \hline
Teacher   & 78.33 ($\pm$0.08) & 21.04 ($\pm$0.48) & 21.81 ($\pm$0.13) \\ \hline
HKD       & 78.64 ($\pm$0.37) & 21.56 ($\pm$0.92) & 22.54 ($\pm$0.60) \\
FitNet    & 78.62 ($\pm$0.20)  & 20.66 ($\pm$0.81) & 21.70 ($\pm$0.56) \\
AT        & 78.63 ($\pm$0.22) & 21.28 ($\pm$0.28) & 22.24 ($\pm$0.51) \\ \hline
SS        & 79.67 ($\pm$0.36) & 4.87 ($\pm$0.69) & 5.22 ($\pm$0.81) \\
AD        & 76.10 ($\pm$1.12) & \textbf{2.51 ($\pm$2.12)} & \textbf{3.34 ($\pm$3.09)} \\ \hline
SS+HKD    & 79.95 ($\pm$0.42) & 8.41 ($\pm$1.78) & 8.27 ($\pm$1.83) \\
SS+FitNet & 79.77 ($\pm$0.28) & 9.31 ($\pm$1.77) & 8.61 ($\pm$2.23) \\
AD+HKD    & 80.31 ($\pm$0.30)  & 3.40 ($\pm$2.46) & 4.05 ($\pm$2.86) \\
AD+FitNet & \textbf{80.60 ($\pm$0.14)} & 5.12 ($\pm$1.67) & 5.51 ($\pm$1.64) \\ \hline
MFD      & 80.15 ($\pm$0.29) & 5.46 ($\pm$0.95) & 5.86 ($\pm$0.83) \\ \hline
\end{tabular}}
\end{table}
Table \ref{tab:cifar_supp}, \ref{tab:utkface_supp} and \ref{tab:celeba_supp} show the detail results. The number in the parenthesis with $\pm$ sign stands for the standard deviation of each metric obtained from 4 independent runs.

%-------------------------------------------------------------------------
{\small
\bibliographystyle{ieee_fullname}
\bibliography{egbib}
}